\begin{document}

\onecolumn 
\pagestyle{empty} 
\begin{framed}
This work has been submitted to the IEEE for possible publication. Copyright may be transferred without notice, after which this version may no longer be accessible. 2025 IEEE. Personal use of this material is permitted. Permission from IEEE must be obtained for all other uses, in any current or future media, including reprinting/republishing this material for advertising or promotional purposes, creating new collective works, for resale or redistribution to servers or lists, or reuse of any copyrighted component of this work in other works.
\end{framed}
\twocolumn 
\setcounter{page}{1} 
\pagestyle{headings} 

\newtheorem{theorem}{Theorem}
\newtheorem{exmp}{Example}
\newtheorem{lemma}{Lemma}
\newtheorem{definition}[exmp]{Definition}
\renewcommand{\algorithmicrequire}{\textbf{Input:}}
\renewcommand{\algorithmicensure}{\textbf{Output:}}
\newcommand{\tabincell}[2]{\begin{tabular}{@{}#1@{}}#2\end{tabular}}

\title{Emulating Full Participation: An Effective and Fair Client Selection Strategy for Federated Learning}

\author{Qingming Li, Juzheng Miao, Puning Zhao, Li Zhou, H. Vicky Zhao, Shouling Ji, Bowen Zhou, Furui Liu
\thanks{This work was supported by National Science and Technology Major Project (2023ZD0121401). \emph{(Corresponding author: Furui Liu)}}
\thanks{Qingming Li, Shouling Ji are with the College of Computer Science and Technology at Zhejiang University, Hangzhou, Zhejiang, 310027, China. E-mail: \{liqm, sji\}@zju.edu.cn.}
\thanks{Juzheng Miao is with the Department of Computer Science and Engineering, The Chinese University of Hong Kong, Hong Kong, China. E-mail: jzmiao22@cse.cuhk.edu.hk.}
\thanks{Li Zhou, Furui Liu are with Zhejiang Lab, Hangzhou, Zhejiang, 311000, China. E-mail: \{pnzhao, zhou.li, liufurui\}@zhejianglab.com.}
\thanks{Puning Zhao is with School of Cyber Science and Technology, Shenzhen Campus of Sun Yat-sen University, Shenzhen 518107, China. E-mail: zhaopn@mail.sysu.edu.cn.}
\thanks{H. Vicky Zhao is with the Department of Automation, Beijing National Research Center for Information Science and Technology, Tsinghua University, Beijing 100084 P. R. China (email: vzhao@tsinghua.edu.cn).}
\thanks{Bowen Zhou is with the Department of Electronic Engineering, Tsinghua University, Beijing, 100084, China. E-mail: zhoubowen@tsinhua.edu.cn.}
}

\maketitle

\begin{abstract}
In federated learning, client selection is a critical problem that significantly impacts both model performance and fairness. Prior studies typically treat these two objectives separately, or balance them using simple weighting schemes. However, we observe that commonly used metrics for model performance and fairness often conflict with each other, and a straightforward weighted combination is insufficient to capture their complex interactions. To address this, we first propose two guiding principles that directly tackle the inherent conflict between the two metrics while reinforcing each other. Based on these principles, we formulate the client selection problem as a long-term optimization task, leveraging the Lyapunov function and the submodular nature of the problem to solve it effectively. Experiments show that the proposed method improves both model performance and fairness, guiding the system to converge comparably to full client participation. This improvement can be attributed to the fact that both model performance and fairness benefit from the diversity of the selected clients' data distributions. Our approach adaptively enhances this diversity by selecting clients based on their data distributions, thereby improving both model performance and fairness.
\end{abstract}

\begin{IEEEkeywords}
Federated Learning, Client Selection, Coreset Selection, Individual Fairness, Lyapunov Function
\end{IEEEkeywords}

\section{Introduction}
\label{sec:intro}

Federated learning (FL) facilitates collaborative model training without the necessity of sharing local data~\cite{mcmahan2017communication,kairouz2021advances} and is widely used in various domains~\cite{tan2022federated,yin2022fgc}. In FL, model parameters or gradient updates are frequently exchanged between the server and clients, which leads to substantial communication overhead. To address the challenge of limited bandwidth, a common approach is to select a subset of clients for local training~\cite{fu2023client,Bian2024accelerating}.  Therefore, a critical challenge is how to select proper and representative clients to participate, which is known as the client selection problem.
%

In heterogeneous scenarios—where clients hold training data with diverse distributions—the client selection problem becomes particularly critical. Specifically, client selection impacts the FL system in two key aspects. First, client selection influences \textbf{model performance}. Different selection strategies lead to distinct optimization trajectories, and an improper client selection may cause the optimization process to deviate significantly from the optimal path. Second, client selection affects \textbf{fairness}. The global model tends to perform better for frequently selected clients, as their data is better optimized, while producing biased predictions for less frequently selected clients. Since every client participates in the system with the expectation of obtaining accurate predictions for its own data, this selection imbalance may drive underrepresented clients to leave the system. Therefore, it is essential to establish an effective and fair client selection strategy.

\textbf{Limitation of Prior Works.} Prior studies usually treat model performance and fairness as separate objectives and attempt to balance the two using simple weighting schemes~\cite{du2021fairness, huang2020efficiency, zhu2022online}. However, model performance and fairness metrics commonly in use are often conflicting, and a simple weighted combination is insufficient to capture their intricate interactions. Specifically, to achieve high model performance, existing client selection methods~\cite{goetz2019active,cho2022towards,tang2022fedcor} often prioritize clients with higher training losses, as these clients are considered more challenging to fit their local data. However, such loss-guided methods can lead to biased predictions for clients that are rarely selected. On the other hand, to enhance fairness, existing approaches typically focus on performance fairness~\cite{shi2023towards, li2021ditto,du2021fairness, huang2020efficiency, zhu2022online}, which requires the model to deliver similar performance or uniform selection probabilities across all clients. However, this uniform selection approach overlooks the heterogeneous nature of clients' data distributions, thereby sacrificing model performance. Therefore, a critical challenge lies in reconciling the conflicting goals of model performance and fairness, particularly in heterogeneous scenarios.

\begin{figure}[tbp]
    \centering
    \includegraphics[width=8cm]{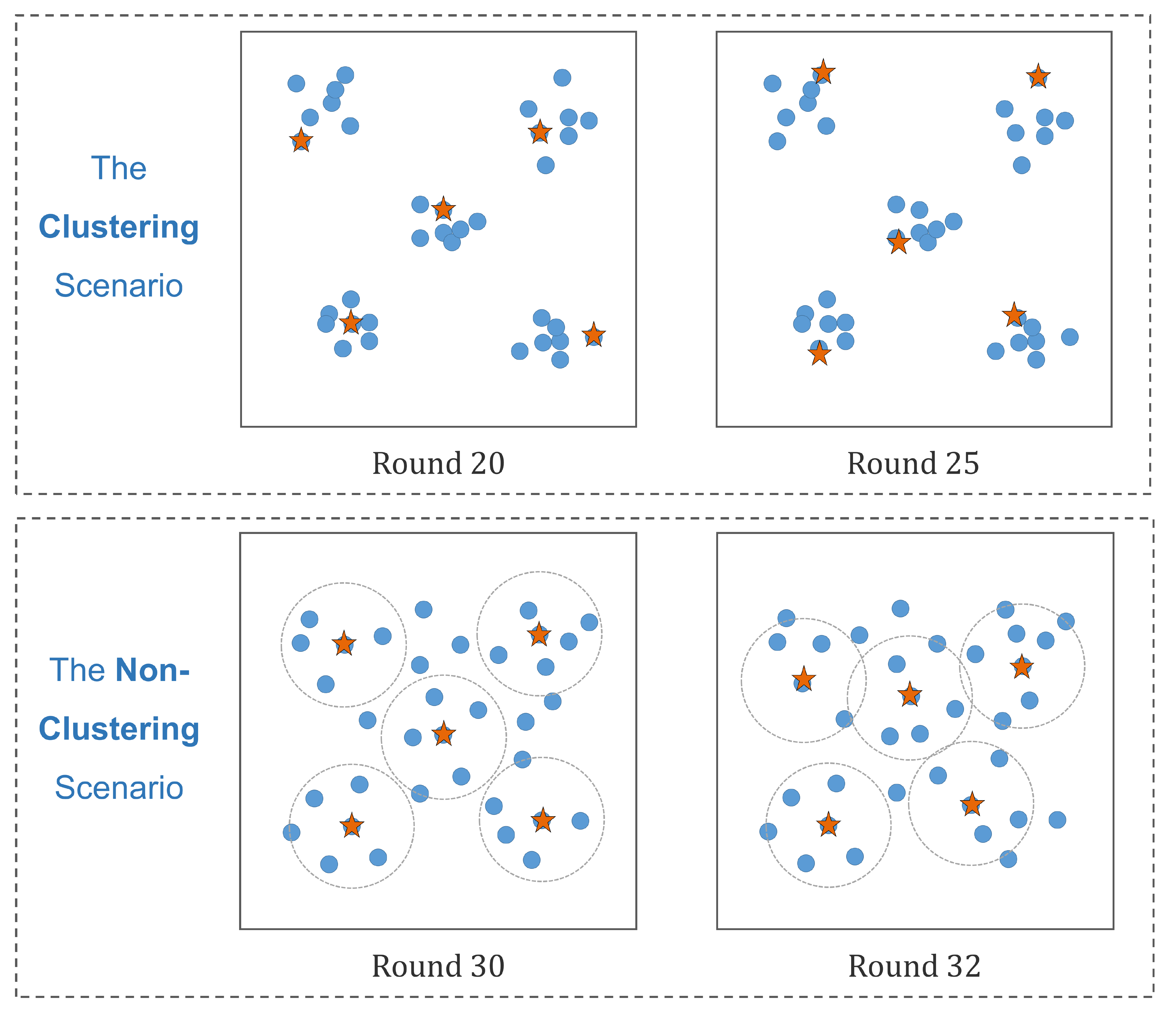}
    \caption{Visualization of the client selection results. Clients are represented by blue dots, and selected clients are marked with orange stars. In the non-clustering scenario, clients arranged in a circle are effectively represented by the selected client positioned at the center.}
    \label{fig:intro}
\end{figure}

To design an effective and fair client selection strategy, we propose two guiding principles. These principles address the inherent conflict between the two metrics by capturing their interactions. First, from the perspective of model performance, we propose \textbf{Principle I}: \emph{The data distribution generated by the selected subset of clients should closely resemble the data distribution of the full client participation}. Full client participation is chosen as the standard because models trained with all clients generally yield robust results, serving as a comprehensive benchmark—except in certain extreme cases where some clients possess highly noisy or corrupted data that can skew the model's performance. Importantly, for clients that are less frequently selected in the loss-guided methods~\cite{goetz2019active,cho2022towards,tang2022fedcor}, this strategy ensures they are not excluded from selection by approximating the distribution of full client participation. 

Furthermore, from the perspective of fairness, we introduce \textbf{Principle II}: \emph{clients with similar data distributions should have similar frequencies of being selected.} This principle aligns with the concept of Individual Fairness (IF)~\cite{dwork2012fairness}, which posits that instances with similar features in a dataset should yield similar predictions or outcomes. We refer to this principle as \emph{the individual fairness constraint}. Unlike the uniform sampling strategies used in prior works~\cite{huang2020efficiency, zhu2022online, shi2023towards}, this constraint specifically focuses on clients with similar data distributions. It is better suited to heterogeneous data scenarios, as it avoids sacrificing model performance by preventing the uniform selection of clients who contribute minimally to model improvement.

The two principles are not independent and mutually reinforce each other. Consider an extreme case where there are two clusters of clients, and the data distributions are similar within each cluster. If only Principle II is applied, it may result in the system selecting clients exclusively from one cluster while neglecting the other. This phenomenon was observed in our experimental results, whereas Principle I helps mitigate this issue. On the other hand, if only Principle I is applied, the system may end up selecting the same client from each cluster repeatedly, whereas Principle II prevents this bias.

In this study, we integrate the two principles and propose an effective and fair client selection strategy for federated learning, called \emph{LongFed}. We begin by providing a mathematical formulation for both Principle I and Principle II. Then, we model the client selection problem as a long-term optimization function, introducing a tradeoff factor to balance the two principles. To solve this optimization, we simplify it using the Lyapunov optimization from control theory, and propose a fast greedy algorithm based on the submodular nature of the problem. We also theoretically analyze the convergence of the proposed strategy, demonstrating that it converges at a rate of $\mathcal{O}(1/t)$, which is the same as loss-guided selection methods~\cite{goetz2019active,cho2022towards,tang2022fedcor,balakrishnan2022diverse}.

We evaluate the proposed strategy through extensive experiments, with results demonstrating that our method enhances both model performance and fairness. Regarding model performance, the strategy effectively guides the system to converge along a trajectory similar to that of full client participation, outperforming prior methods in most cases. Additionally, the strategy achieves strong fairness, evidenced by a low standard deviation in the selection frequencies of clients with similar data distributions. This improvement can be attributed to the fact that both model performance and fairness benefit from the diversity of the selected clients' data distributions, which is promoted by our two proposed principles. Unlike existing fair federated learning approaches~\cite{huang2020efficiency, zhu2022online, shi2023towards} that treat all clients equally, our method adaptively enhances this diversity based on clients' data distributions, making it better suited for heterogeneous environments. Additionally, our method introduces only a marginal time increase (less than 0.4 ms) compared to existing approaches.

We also provide visualization results to illustrate the effectiveness of our method. An example is shown in Fig.~\ref{fig:intro}. When clients exhibit clear clustering patterns, the proposed strategy selects one client from each cluster and chooses different clients across multiple rounds. This selection process aligns with the clustered federated learning~\cite{caldarola2021cluster, ghosh2020efficient}. More importantly, in scenarios where the clustering pattern is not evident—which is more common in practical cases—the proposed strategy selects clients that cover a majority of client population. This diversity not only enhances model performance but also improves fairness.

Our contributions are as follows. 
\begin{itemize}
    \item We identify the inherent conflict between model performance and fairness in the client selection problem and propose a strategy that leads to improvements in both. 
    \item We address a previously underexplored issue: the frequency of selecting clients with similar data distributions. We introduce an individual fairness criterion to mathematically formulate and effectively resolve this issue.
    \item We provide an extensive theoretical analysis of the convergence ability of the proposed strategy.
\end{itemize}

\section{Related Work}
\label{sec:review}
Existing literature relevant to our work can be broadly categorized into two groups: client selection methods and fairness issues in federated learning.

\subsection{Client Selection in Federated Learning}
\label{sec:client_selection_review}
In vanilla federated learning systems \cite{mcmahan2017communication}, the random selection strategy is commonly employed to choose clients. Recent works have proposed various improvements, including contribution-based \cite{shyn2021fedccea,mcmahan2017communication,liu2022gtg,wang2019measure,sun2023shapleyfl}, loss-based \cite{goetz2019active,cho2022towards,tang2022fedcor}, and cluster-based methods \cite{caldarola2021cluster,ghosh2020efficient}.

The first type focuses on evaluating client contributions, where clients with higher contributions are assigned higher selection probabilities. One commonly used metric is the local data size, and clients with larger datasets are considered to have higher contributions \cite{shyn2021fedccea,mcmahan2017communication}. Another approach involves employing the Shapley value \cite{shapley1953value} from game theory, which calculates the average marginal model improvement of each client over all possible coalitions \cite{liu2022gtg,wang2019measure,sun2023shapleyfl}. Clients that result in larger model performance improvements are regarded as making larger contributions and are assigned higher probabilities of selection. Additionally, some methods evaluate the similarity between the local model at each client and the aggregated global model at the server. Clients with higher similarities are considered to bring little improvement to the global model and are thus assigned lower selection probabilities in subsequent training epochs \cite{jiang2023fair,karimireddy2020scaffold}.

The second type involves selecting clients based on their training losses. These methods consider that clients with higher training losses may struggle to effectively fit their local data. As a result, they assign higher selection probabilities or weights to these clients \cite{goetz2019active,cho2022towards}. FedCor is a representative work in loss-based approaches that employs Gaussian processes to model the loss correlations between clients and selects clients with a substantial reduction in expected global loss \cite{tang2022fedcor}. Furthermore, recognizing that clients may contain similar and redundant information, a diverse strategy is proposed to choose a subset of clients that can best represent the full client set \cite{balakrishnan2022diverse}. 

The third approach involves a cluster-based strategy. Recognizing that clients may have different data distributions or have their own learning tasks, these clustering-based approaches divide the clients into several clusters. Clients within the same cluster exhibit similar data distributions, whereas those in different clusters may display significant variations in their data distributions. In aggregation, the server randomly selects a client from each cluster. In this cluster-based paradigm, FedCG leverages a graph neural network to capture gradient sharing across multiple clusters \cite{caldarola2021cluster}. Moreover, IFCA addresses the cluster identification problem by determining the cluster membership of each client and optimizes each of the cluster models in a federated learning framework \cite{ghosh2020efficient}.

\subsection{Fairness in Federated Learning}
There are three types of fairness in federated learning: collaborative fairness, group fairness, and performance fairness. Collaborative fairness emphasizes that clients who make larger contributions should be rewarded with correspondingly larger rewards, and the assessment of client contributions is a key challenge. Client contribution evaluation methods have been discussed in Section \ref{sec:client_selection_review} in the context of contribution-based client selection methods. 

Group fairness \cite{mehrabi2021survey}, also known as algorithmic fairness, emphasizes that model outputs should not unfairly discriminate against vulnerable or underrepresented groups, such as minorities, women, or the aged \cite{chen2023privacy}. FairFed \cite{ezzeldin2023fairfed} serves as one of the representative works of addressing this concern. In FairFed, both global fairness metrics at the server and local fairness metrics at each client are defined. These metrics assess disparities in opportunities among different groups. FairFed then dynamically adjusts aggregation weights at each round based on the discrepancies between global and local fairness metrics, which effectively mitigates the biases towards sensitive attributes \cite{ezzeldin2023fairfed}. 

Performance fairness refers that the model should produce similar performance across all clients, aligning most closely with our objectives. Various approaches have emerged to achieve performance fairness. For example, in \cite{du2021fairness}, fairness is defined by assigning uniform weight to each client and introduced as an additional constraint in the optimization function. Besides, \cite{li2021ditto} identifies a trade-off between model robustness and fairness, and proposes a personalized framework to inherently achieve both fairness and robustness benefits. Moreover, several works study the fairness issue by considering that the probability of being selected is similar for all clients \cite{huang2020efficiency,zhu2022online,shi2023fairness}. However, as introduced in Section \ref{sec:intro}, the uniform selection method disregards the heterogeneous data distribution among clients. 

In summary, existing client selection methods only consider the model performance and do not address the fairness. Although some fair federated learning methods have been proposed, the uniform selection constraint adopted in their methods could disregard the heterogeneous data distribution among clients. Therefore, it remains a critical problem for client selection to simultaneously address model performance and fairness. 

\section{The Proposed Optimization Function}
\label{sec:model}
In this section, we begin with an introduction to the federated learning system. Next, we provide a mathematical formulation for both Principle I and Principle II, and model the client selection problem as a long-term optimization function. Last, we apply Lyapunov optimization to simplify the optimization function.
 
\subsection{Preliminary of Federated Learning}
In our work, we consider that the federated learning system consists of a central server and a set of clients denoted as $\mathbb{N}=\{1,\cdots, N\}$.  
Each client $i\in \mathbb{N}$ has its own local dataset $\mathcal{D}_i$ with a size of $|\mathcal{D}_i|$. 
In the $t$-th round, the server selects a subset of clients, denoted as $\mathbb{S}^t$ with $|\mathbb{S}^t|=K<N$. Clients within the subset $\mathbb{S}^t$ receive the global model $\boldsymbol{w}^t$ from the server. They then compute local updates on their respective local datasets, transmitting the local gradients $\nabla f_j(\boldsymbol{w}^t)$ back to the server. The server aggregates these gradients and updates the model using
\begin{align}
\label{equ:aggregate}
\boldsymbol{w}^{t+1}=\boldsymbol{w}^t-\eta_t\sum_{j\in \mathbb{S}^t}\theta^t_j \nabla f_j(\boldsymbol{w}^t).
\end{align}
Here, $\eta_t$ represents the predefined learning rate, and $\theta^t_j$ denotes the weight assigned to client $j\in\mathbb{S}^t$ in the $t$-th round. The training process lasts for $T$ rounds until the global model achieves convergence. The subset $\mathbb{S}^t$ and the weights $\theta^t_j$ with $j\in\mathbb{S}^t$ are the variables to be determined. 

\subsection{The Optimization Function}
\label{sec:optimization}

\subsubsection{Formulation of Principle I}
\label{sec:diversity}
In our work, we employ the Principle I to guide the client selection in a single round. Specifically, we evaluate the estimation error between the aggregated gradient obtained from the client subset $\mathbb{S}^t$ and the aggregated gradient obtained from the full client set $\mathbb{N}$. A small estimation error indicates that the subset of selected clients can effectively represent the data distribution of the full client set. 
Mathematically, we formulate it as 
\begin{align}
\label{equ:diversity}
\text{D}(\mathbb{S}^t)= \min_{\theta_j^t>0}||\sum_{i\in \mathbb{N}} \nabla f_i(\boldsymbol{w}^t) - \sum_{j\in \mathbb{S}^t } \theta_j^t \nabla f_j(\boldsymbol{w}^t) ||_2^2.
\end{align}
Here, $\nabla f_i(\boldsymbol{w}^t)$ represents the gradient of the $i$-th client, $\sum_{i\in \mathbb{N}} \nabla f_i(\boldsymbol{w}^t)$ is the aggregated gradient on the full client set $\mathbb{N}$, and $\sum_{j\in \mathbb{S}^t } \theta_j^t \nabla f_j(\boldsymbol{w}^t)$ denotes the weighted sum of gradients on the client subset $\mathbb{S}^t$. 
Then, selecting a subset of clients that best approximates the data distribution of the full client set is equivalent to choosing a subset that minimizes the estimation error defined in Eq.~\eqref{equ:diversity}.

Notice that the formation in Eq. \eqref{equ:diversity} is similar to the concept of data coreset introduced in \cite{mirzasoleiman2020coresets}. The data coreset involves selecting a weighted small sample of training data to approximate the gradient of the whole training data set. It is important to highlight that data coreset is commonly employed to enhance training efficiency in centralized machine learning scenarios \cite{huang2021coresets,killamsetty2021glister}, while our focus is on optimizing client selection to reduce communication bandwidth in decentralized and federated scenarios. Moreover, data coreset is typically utilized for a one-time data selection process, while client selection in federated learning is a long-term process carried out across multiple rounds. The long-term optimization nature introduces new challenges, such as fairness issues and potential model biases introduced in Section \ref{sec:intro}.


In Eq.~\eqref{equ:diversity}, when clients are not selected in the $t$-th round, their gradients $\nabla f_i(\boldsymbol{w}^t)$ become unknown. To address this challenge, following the analysis in \cite{mirzasoleiman2020coresets}, we assume a mapping $\xi^t:\mathbb{N}\rightarrow \mathbb{S}^t$ that assigns each client $i\in\mathbb{N}$ to a client $j\in\mathbb{S}^t$, i.e., $\xi^t(i)=j$, indicating that client $i$ can be approximately represented by client $j$ in the $t$-th round. For a client $j\in\mathbb{S}^t$, let $\mathbb{C}^t_j=\{i\in \mathbb{N}\,|\,\xi^t(i)=j\}$ be the set of clients that can be represented by client $j$. The value $\theta^t_j=|\mathbb{C}^t_j|$ is the number of such clients being represented, and is used as the weight of client $j$ in Eq. \eqref{equ:aggregate}. Based on the mapping $\xi^t$, we obtain the upper bound of the estimation error in  Eq. \eqref{equ:diversity}, as stated in Theorem \ref{theo:diversity_bound}. In the later section, we utilize $\text{DUB}(\mathbb{S}^t)$ when minimizing the estimation error $\text{D} (\mathbb{S}^t)$ is required. The proof of Theorem~\ref{theo:diversity_bound} is provided in supplementary file. 
\begin{theorem}
\label{theo:diversity_bound}
Define 
\begin{align}
\label{equ:simi}
\text{Dist}_{i,j}(t) = ||\nabla f_i(\boldsymbol{w}^t) - \nabla f_j(\boldsymbol{w}^t)||_2^2, 
\end{align}
and 
\begin{align}
\label{equ:diversity_bound}
    \text{DUB}(\mathbb{S}^t) &\triangleq \sum_{i=1}^{N} \min_{j\in \mathbb{S}^t} \text{Dist}_{i,j}(t),
\end{align}
then $\text{DUB}(\mathbb{S}^t)$ serves as an upper bound for $\text{D}(\mathbb{S}^t)$.
\end{theorem}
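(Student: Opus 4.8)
The plan is to exploit the fact that $\text{D}(\mathbb{S}^t)$ is defined as a \emph{minimum} over admissible weight vectors $(\theta_j^t)_{j\in\mathbb{S}^t}$ with $\theta_j^t>0$: it therefore suffices to exhibit one admissible choice whose objective value is at most $\text{DUB}(\mathbb{S}^t)$. The natural candidate is exactly the one singled out in the text preceding the theorem: assign to each client $i\in\mathbb{N}$ its closest representative in $\mathbb{S}^t$, i.e.\ take $\xi^t(i)\in\arg\min_{j\in\mathbb{S}^t}\text{Dist}_{i,j}(t)$, form the clusters $\mathbb{C}^t_j=\{i\in\mathbb{N}:\xi^t(i)=j\}$, and set $\theta^t_j=|\mathbb{C}^t_j|$. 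First I would verify admissibility: since $\text{Dist}_{j,j}(t)=0$, every $j\in\mathbb{S}^t$ lies in its own cluster, so $\theta^t_j\ge 1>0$ and the $\{\mathbb{C}^t_j\}_{j\in\mathbb{S}^t}$ form a partition of $\mathbb{N}$.

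The core step is then an algebraic identity. Because $\{\mathbb{C}^t_j\}_{j\in\mathbb{S}^t}$ partitions $\mathbb{N}$ and $\theta^t_j=|\mathbb{C}^t_j|$, the weighted aggregate gradient can be re-indexed over all clients:
\begin{align}
\sum_{j\in\mathbb{S}^t}\theta^t_j\nabla f_j(\boldsymbol{w}^t)
=\sum_{j\in\mathbb{S}^t}\sum_{i\in\mathbb{C}^t_j}\nabla f_j(\boldsymbol{w}^t)
=\sum_{i\in\mathbb{N}}\nabla f_{\xi^t(i)}(\boldsymbol{w}^t).
\end{align}
Subtracting this from $\sum_{i\in\mathbb{N}}\nabla f_i(\boldsymbol{w}^t)$ collapses the residual into a sum of per-client discrepancies $\sum_{i\in\mathbb{N}}\big(\nabla f_i(\boldsymbol{w}^t)-\nabla f_{\xi^t(i)}(\boldsymbol{w}^t)\big)$. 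I would then apply sub-additivity of the norm to pull the sum outside and bound each term using the nearest-representative choice, $\|\nabla f_i(\boldsymbol{w}^t)-\nabla f_{\xi^t(i)}(\boldsymbol{w}^t)\|_2^2=\text{Dist}_{i,\xi^t(i)}(t)=\min_{j\in\mathbb{S}^t}\text{Dist}_{i,j}(t)$. Since $\text{D}(\mathbb{S}^t)$ is the minimum over admissible weights, it cannot exceed the objective evaluated at this particular $(\xi^t,\theta^t)$, which gives the claimed inequality $\text{D}(\mathbb{S}^t)\le\text{DUB}(\mathbb{S}^t)$.

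The step I expect to be the main obstacle is the bookkeeping between the \emph{squared} norm defining $\text{D}(\mathbb{S}^t)$ and the sum of per-client squared distances defining $\text{DUB}(\mathbb{S}^t)$. The triangle inequality applied to $\big\|\sum_{i\in\mathbb{N}}\big(\nabla f_i(\boldsymbol{w}^t)-\nabla f_{\xi^t(i)}(\boldsymbol{w}^t)\big)\big\|_2^2$ naturally yields $\big(\sum_{i\in\mathbb{N}}\|\nabla f_i(\boldsymbol{w}^t)-\nabla f_{\xi^t(i)}(\boldsymbol{w}^t)\|_2\big)^2$, whereas $\text{DUB}(\mathbb{S}^t)$ is $\sum_{i\in\mathbb{N}}\|\nabla f_i(\boldsymbol{w}^t)-\nabla f_{\xi^t(i)}(\boldsymbol{w}^t)\|_2^2$. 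Reconciling these is the delicate part: one route is to carry the non-squared norm throughout, so the bound reads $\sqrt{\text{D}(\mathbb{S}^t)}\le\sum_{i\in\mathbb{N}}\min_{j\in\mathbb{S}^t}\|\nabla f_i(\boldsymbol{w}^t)-\nabla f_j(\boldsymbol{w}^t)\|_2$ in the spirit of the coreset bound of \cite{mirzasoleiman2020coresets}; another is to absorb the cross terms via Cauchy--Schwarz, at the price of an explicit factor (e.g.\ the number of unrepresented clients). I would fix which normalization is intended before finalizing; the remaining manipulations are routine.
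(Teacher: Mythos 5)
Your proposal follows exactly the route the paper takes: introduce the nearest-representative mapping $\xi^t$, re-index the weighted aggregate $\sum_{j\in\mathbb{S}^t}\theta_j^t\nabla f_j(\boldsymbol{w}^t)$ as $\sum_{i\in\mathbb{N}}\nabla f_{\xi^t(i)}(\boldsymbol{w}^t)$, collapse the residual into per-client discrepancies, and apply sub-additivity of the norm. The admissibility check and the choice $\theta_j^t=|\mathbb{C}_j^t|$ are the same as in the paper, so on the structural level there is nothing to add.

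The one place where you hesitate is, however, exactly the place where the paper's own proof is loose, and your hesitation is justified. The paper's supplementary proof establishes
\begin{align}
\min_{\theta_j^t}\Big\Vert\sum_{i\in\mathbb{N}}\nabla f_i(\boldsymbol{w}^t)-\sum_{j\in\mathbb{S}^t}\theta_j^t\nabla f_j(\boldsymbol{w}^t)\Big\Vert
\;\le\;
\sum_{i\in\mathbb{N}}\min_{j\in\mathbb{S}^t}\big\Vert\nabla f_i(\boldsymbol{w}^t)-\nabla f_j(\boldsymbol{w}^t)\big\Vert,
\end{align}
i.e.\ the inequality for the \emph{unsquared} norms, and then simply declares the theorem proved, even though the theorem statement defines both $\text{D}(\mathbb{S}^t)$ and $\text{Dist}_{i,j}(t)$ with $\Vert\cdot\Vert_2^2$. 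As you observe, squaring the displayed inequality yields $\text{D}(\mathbb{S}^t)\le\big(\sum_i\min_j\Vert\cdot\Vert_2\big)^2$, and since $\big(\sum_i c_i\big)^2\ge\sum_i c_i^2$ for nonnegative $c_i$, this does \emph{not} imply $\text{D}(\mathbb{S}^t)\le\sum_i\min_j\Vert\cdot\Vert_2^2=\text{DUB}(\mathbb{S}^t)$; Cauchy--Schwarz only gives $\text{D}(\mathbb{S}^t)\le N\cdot\text{DUB}(\mathbb{S}^t)$. So the statement as literally written is not established by either your argument or the paper's without either (a) redefining $\text{D}$ and $\text{Dist}$ without the squares (which is what the coreset bound in \cite{mirzasoleiman2020coresets} actually does, and is harmless for the rest of the paper since $\text{DUB}$ is only ever used as a surrogate objective to minimize), or (b) accepting the extra factor of $N$. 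You should not treat this as a gap in your own understanding to be papered over: state the unsquared version and prove that, or carry the factor $N$ explicitly.
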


In our work, we use partial updates to compute $\text{Dist}_{i,j}(t) $. Specifically, when $t=0$, all clients are selected, and we compute $\text{Dist}_{i,j}(t) $ for each pair of clients. Then, for $t\geq 1$, the server updates $\text{Dist}_{i,j}(t)$ using the gradients only from the selected $K$ clients. That is,
\begin{align}
    \label{equ:simi_update}
    \text{Dist}_{i,j}(t) =  \begin{cases}
        & \Vert\nabla f_i(\boldsymbol{w}^{t}) - \nabla f_j(\boldsymbol{w}^{t})\Vert_2^2, \;  \text{if} \;i,j\in\mathbb{S}^{t},\\ 
        & \text{Dist}_{i,j}(t-1),  \; \text{otherwise}. 
    \end{cases}
\end{align}
Although using $\text{Dist}_{i,j}(t-1)$ to approximate $\text{Dist}_{i,j}(t)$ may introduce biases, experimental results demonstrate that there are minimal impacts on the model convergence. This is because there is limited gradient variations between successive communication rounds, and the proposed individual fairness constraint ensures that all clients have the opportunity of being selected and the corresponding $\text{Dist}_{i,j}(t)$ can be updated. 

\subsubsection{Formulation of Principle II} 
\label{sec:fairness}

As mentioned in Section~\ref{sec:intro}, we propose the individual fairness constraint (Principle II) to guide the client selection across multiple rounds. It asserts that clients with similar data distributions should have similar frequencies of being selected. 

First, we use $\text{Dist}_{i,j}(t) $ in Eq.~\eqref{equ:simi} to measure the similarity of the data distribution.
Second, let $x_{i,t}$ represent whether the $i$-th client is selected in the $t$-th round, with $x_{i,t}=1$ if $i\in \mathbb{S}^{t}$ and $x_{i,t}=0$ otherwise. We use 
\begin{align}
\label{equ:probability}
p_i = \frac{1}{T} \sum_{t=1}^T \mathbb{E}(x_{i,t}).
\end{align}
to evaluate the frequency of a client $i$ being selected in $T$ rounds. Note that $\{x_{i,t}\}$ forms a stochastic process, and the expectation operation $\mathbb{E}$ is applied in Eq.~\eqref{equ:probability}. 

Building on $\text{Dist}_{i,j}(t)$ and $p_i$, we employ the $\epsilon\text{-}\delta$-IF framework utilized in~\cite{john2020verifying, Benussi2022individual} to quantify the individual fairness constraint. We evaluate a client selection strategy at the end of the $T$-th round. Specifically, given $\epsilon, \delta \geq 0$, a client selection strategy is of individual fairness if for any clients $i,j\in \mathbb{N}$ with $\text{Dist}_{i,j}(T)\leq \epsilon$, the difference in their selection frequencies, namely, $p_i$ and $p_j$, should not exceed $\delta$. The mathematical formulation of $\epsilon\text{-}\delta$-IF is provided in Definition~\ref{def:IF}, and the impact of parameter selection for $\epsilon$ and $\delta$ is discussed in Section \ref{sec:parameter_selection}. 
\begin{definition}
\label{def:IF}
($\epsilon\text{-}\delta$-IF). Consider $\epsilon, \delta\geq 0$, a client selection strategy is said to be of individual fairness if 
\begin{align}
\label{equ:IF}
\forall\, i,j\in \mathbb{N}, \,\, \text{Dist}_{i,j}(T)\leq \epsilon \,\, \Rightarrow \,\, |p_i -p_j|\leq \delta.
\end{align}
\end{definition}

The $\epsilon\text{-}\delta$-IF requires examining Eq.~\eqref{equ:IF} for all pairs of clients, leading to significant computational overhead. To address this issue, we propose to determine a reference client $i^\star$ for each client $i$, which exhibits the largest difference in the selection frequency and has the gradient distance less than $\epsilon$. That is, 
\begin{align}
\label{equ:client_star}
     i^{\star} = \text{argmax}_{\text{Dist}_{i,j}(T)\leq \epsilon}\,|p_i-p_j|, \; \forall \; j\in \mathbb{N}.
\end{align}
Then, the $\epsilon\text{-}\delta$-IF is simplified as 
\begin{align}
\label{equ:client_star_IF}
    |p_i-p_{i^{\star}}|\leq \delta, \quad \forall \; i\in \mathbb{N}.
\end{align} 
By introducing the reference client in Eq.~\eqref{equ:client_star}, the evaluation of $\epsilon\text{-}\delta$-IF is simplified from examining Eq.~\eqref{equ:IF} for pairs of clients $i,j\in \mathbb{N}$ to evaluating Eq.~\eqref{equ:client_star_IF} for individual clients $i\in \mathbb{N}$. This reduction in the number of variables facilitates further optimization. 

\subsubsection{The Optimization Function} 
\label{sec:optimization_function}
Building on the estimation error in Eq.~\eqref{equ:diversity} and the individual fairness constraint in Eq.~\eqref{equ:client_star_IF}, we formulate the client selection strategy as an optimization problem. The objective is to select a series of subsets $\{\mathbb{S}^1, \cdots, \mathbb{S}^{T}\}$ with $|\mathbb{S}^t|=K$ that minimize the expected estimation error over all $T$ rounds while adhering to the individual fairness constraint. That is, 
\begin{align}
\label{equ:p1}
\text{(P1)} \quad   &\min_{\{\mathbb{S}^1, \cdots, \mathbb{S}^{T}\} } \; \lim_{T \to +\infty}\frac{1}{T}\, \sum_{t=1}^T \mathbb{E}\left[\text{DUB}(\mathbb{S}^t)\right], \\ \notag
    &\qquad  \textrm{s.t.} \quad \quad \text{Eq. \eqref{equ:client_star_IF}}. 
\end{align}

However, directly solving the optimization function in Eq.~\eqref{equ:p1} is infeasible. The optimization objective and constraints, particularly $p_i$ in Eq.~\eqref{equ:client_star_IF}, are presented in a time-averaged form. That is, these values are determined by averaging over all $T$ rounds, which can only be accomplished at the end of training. In contrast, federated learning requires clients to be selected online in each round. This misalignment poses a practical implementation challenge. In our work, the solution to address this misalignment is proposed in Section \ref{sec:transformation}.

\subsection{Transformation Under Lyapunov Optimization}
\label{sec:transformation}
To solve P1 in Eq.~\eqref{equ:p1}, we leverage Lyapunov optimization, a technique from control theory used to analyze the stability of dynamic systems~\cite{diehl2010lyapunov, huang2020efficiency}. The main idea behind Lyapunov optimization is to break down the long-term time-averaged constraints into constraints that can be adhered to in each communication round.  
By leveraging Lyapunov optimization, the problem stated as P1 in Eq. \eqref{equ:p1} is ultimately converted into the problem P3 in Eq. \eqref{equ:P3}. Details of the problem transformation are described below, which consists of four steps.

\textbf{(a) Transformation of Individual Fairness Constraints. } 
Before employing Lyapunov optimization, we remove the absolute value sign from the constraint in Eq. \eqref{equ:client_star_IF} and rephrase it as two equivalent constraints,
\begin{align}
\label{equ:IF_Z} \frac{1}{T}\sum_{t=1}^T \mathbb{E} (x_{i,t}-x_{i^\star,t}) - \delta \leq 0, \;
    \forall\, i\in \mathbb{N}, \\ 
\label{equ:IF_Q} \frac{1}{T}\sum_{t=1}^T \mathbb{E} (-x_{i,t}+x_{i^\star,t}) - \delta \leq 0, \;
    \forall\, i\in \mathbb{N}.
\end{align}

\textbf{(b) Introduction of Virtual Queues $Z_i(t)$, $Q_i(t)$ and $\Theta(t)$. } Following the general process of Lyapunov optimization~\cite{diehl2010lyapunov}, we define two virtual queues, namely $Z_i(t)$ for the constraint in Eq. \eqref{equ:IF_Z} and $Q_i(t)$ for the constraint in Eq. \eqref{equ:IF_Q}, for all clients $i\in\mathbb{N}$. Specifically, these queues are initialized as $Z_i(0)=0$ and $Q_i(0)=0$, and updated according to the following rule
\begin{align}
\label{equ:update}
    Z_i(t+1) = \max \left\{Z_i(t)+x_{i,t}-x_{i^\star,t}-\delta, 0\right\}, \\  \notag
    Q_i(t+1) = \max \left\{Q_i(t)-x_{i,t}+x_{i^\star,t}-\delta, 0\right\}.
\end{align}
By the introduction of $Z_i(t)$ and $Q_i(t)$, we have Theorem \ref{theo:virtual_queue}, which converts the long-term constraints on $x_{i,t}$ in Eq. \eqref{equ:IF_Z} and \eqref{equ:IF_Q} into the stability constraints for $Z_i(t)$ and $Q_i(t)$, respectively. The proof of Theorem \ref{theo:virtual_queue} is provided in the supplementary file.
\begin{theorem}
\label{theo:virtual_queue}
    The constraints in Eq. \eqref{equ:IF_Z} and \eqref{equ:IF_Q} hold if $Z_i(t)$ and $Q_i(t)$ remain stable, that is,
    \begin{align}
        \lim_{T\to +\infty} \frac{\mathbb{E}[Z_i(T)]}{T}=0, \; \text{and} \; \lim_{T\to +\infty} \frac{\mathbb{E}[Q_i(T)]}{T}=0.
    \end{align}
\end{theorem}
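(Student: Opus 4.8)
The plan is to establish Theorem~\ref{theo:virtual_queue} by the standard telescoping argument used in Lyapunov optimization for virtual queues, working with $Z_i(t)$ first and then noting that $Q_i(t)$ is handled symmetrically. First I would observe that from the update rule in Eq.~\eqref{equ:update}, dropping the $\max$ with $0$ only increases the right-hand side, so for every $t$ we have the inequality $Z_i(t+1) \geq Z_i(t) + x_{i,t} - x_{i^\star,t} - \delta$. Rearranging gives $x_{i,t} - x_{i^\star,t} - \delta \leq Z_i(t+1) - Z_i(t)$.

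Next I would sum this inequality over $t = 0, 1, \dots, T-1$ (or $t=1,\dots,T$, matching the indexing of Eq.~\eqref{equ:IF_Z}) so that the right-hand side telescopes to $Z_i(T) - Z_i(0) = Z_i(T)$, using the initialization $Z_i(0)=0$. This yields $\sum_{t} (x_{i,t} - x_{i^\star,t} - \delta) \leq Z_i(T)$. Dividing both sides by $T$ and taking expectations gives $\frac{1}{T}\sum_{t=1}^T \mathbb{E}(x_{i,t} - x_{i^\star,t}) - \delta \leq \frac{\mathbb{E}[Z_i(T)]}{T}$. Then taking $T \to +\infty$ and invoking the hypothesized stability $\lim_{T\to+\infty}\mathbb{E}[Z_i(T)]/T = 0$ forces the left-hand side to be $\leq 0$, which is precisely the constraint in Eq.~\eqref{equ:IF_Z}. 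The identical argument applied to $Q_i(t)$, using $Q_i(t+1) \geq Q_i(t) - x_{i,t} + x_{i^\star,t} - \delta$ and $Q_i(0)=0$, recovers Eq.~\eqref{equ:IF_Q}.

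There is no serious obstacle here; the result is essentially a bookkeeping lemma. The only point requiring mild care is keeping the summation range and the expectation placement consistent with the definition of $p_i$ in Eq.~\eqref{equ:probability} and the constraints in Eqs.~\eqref{equ:IF_Z}--\eqref{equ:IF_Q}, and noting that the inequality chain survives the expectation because expectation is monotone and linear (so $\mathbb{E}[Z_i(T)] \geq 0$ and the telescoped bound passes through). One might also remark that the reference client index $i^\star$ is fixed per client $i$ in this analysis, so $x_{i^\star,t}$ is well-defined throughout the sum; any subtlety about $i^\star$ itself depending on the realized trajectory is orthogonal to this queue-stability argument and is deferred to the discussion around Eq.~\eqref{equ:client_star}.
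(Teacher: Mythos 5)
Your proposal is correct and follows essentially the same route as the paper's proof: drop the $\max$ in the queue update to get $Z_i(t+1)\geq Z_i(t)+x_{i,t}-x_{i^\star,t}-\delta$, telescope over $t$, divide by $T$, take expectations, and let $T\to+\infty$ with the stability hypothesis. The symmetric treatment of $Q_i(t)$ also matches the paper, which likewise omits it as analogous.
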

Then, we define a global queue $\Theta(t)$, which stores the state of $Z_i(t)$ and $Q_i(t)$ for all clients. That is, 
\begin{align}
    \label{equ:layponov}
    \Theta(t) \triangleq \left[Z_1(t),\cdots, Z_n(t), Q_1(t), \cdots, Q_n(t)\right]. 
\end{align}

\textbf{(c) Introduction of Lyapunov Function $L\left(\Theta(t)\right)$ and Lyapunov Drift $\Delta\left(\Theta(t)\right)$. } 
Following the general process of Lyapunov optimization~\cite{diehl2010lyapunov}, we define the Lyapunov function \cite{diehl2010lyapunov} as 
\begin{align}
    L\left(\Theta(t)\right) \triangleq \frac{1}{2} \sum_{i=1}^N \left[Z_i^2(t)+Q_i^2(t)\right],
\end{align}
which represents the sum of the squares of all elements in $\Theta(t)$. Then, the increase of $\Theta(t)$ from the communication round $t$ to $(t + 1)$ is formulated as
\begin{align}
\label{equ:drift}
    \Delta\left(\Theta(t)\right) \triangleq \mathbb{E} \left[L\left(\Theta(t+1)\right)- L\left(\Theta(t)\right)|\Theta(t)\right],
\end{align}
which is called \emph{Lyapunov drift} \cite{diehl2010lyapunov}. If the drift $\Delta\left(\Theta(t)\right)$ remains sufficiently small in each round, the constraints in Eq. \eqref{equ:IF_Z} and Eq. \eqref{equ:IF_Q} will be satisfied after $T$ rounds. That is, introducing $\Delta\left(\Theta(t)\right)$ allows us to break down the time-averaged constraints in Eq. \eqref{equ:IF_Z} and Eq. \eqref{equ:IF_Q} into the specific requirements for $\Delta\left(\Theta(t)\right)$ in each communication round.

\textbf{(d) Introduction of the Tradeoff Factor $V$ and two Variables $m_{i,t}$ and $ n_{i,t}$. } To ensure  $\Delta\left(\Theta(t)\right)$ remains sufficiently small, a straightforward method is to combine the optimization objection with the drift $\Delta\left(\Theta(t)\right)$, and minimize both simultaneously. Mathematically, in our work, the optimization problem in Eq. \eqref{equ:p1} is transformed into 
\begin{align}
\label{equ:P2}
    \text{(P2)}\; \min_{\mathbb{S}^t}\quad (1-V)\cdot \Delta\left(\Theta(t)\right)+V\cdot \text{DUB}(\mathbb{S}^t)
\end{align}
with $|\mathbb{S}^t|=K$ as a constraint, where $V$ is a predefined trade-off factor.
Note that the expectation notation $\mathbb{E}$ is dropped since we are only concerned with a single communication round.

However, solving P2 in Eq. \eqref{equ:P2} still presents two challenges.  First, determining $i^\star$ requires the information about $p_i$, which is not available until the end of training. To address the issue, we calculate the frequency of a client $i$ being selected up to the $t$-th round, denoted as $p_i(t)=\frac{1}{t}\sum_{k=1}^t x_{i,t}$. Then, we determine the reference client in the $t$-th round by 
\begin{align}
\label{equ:client_star_t}
i^\star_t=\text{argmax}_{\text{Dist}_{i,j}(t)\leq \epsilon}|p_i(t)-p_j(t)|, \; \forall \, j\in \mathbb{N}.
\end{align}
In our experiment, we replace the reference client $i^\star$ in Eq. \eqref{equ:client_star} with $i^\star_t$ in Eq. \eqref{equ:client_star_t}, thus it can be determined using information available up to the $t$-th round.

Additionally, because the computation of $\Delta\left(\Theta(t)\right)$ requires information about $L\left(\Theta(t+1)\right)$, which is not available in the current round $t$. To overcome this issue, we derive an upper bound for $\Delta\left(\Theta(t)\right)$ using Theorem \ref{theo:drift_bound} and minimize the upper bound instead. The proof is provided in the supplementary file.
\begin{theorem}
\label{theo:drift_bound}
    Define 
    \begin{align}
        \label{equ:m_n}
         m_{i,t}=x_{i,t}-x_{i^\star_t,t}-\delta, \;  \text{and} \; n_{i,t}=-x_{i,t}+x_{i^\star_t,t}-\delta,
    \end{align}
   then $\Delta\left(\Theta(t)\right)$ is bounded by
    \begin{align}
        \Delta\left(\Theta(t)\right)\leq B + \sum_{i=1}^N \left[Z_i(t)m_{i,t} + Q_i(t)n_{i,t}\right],
    \end{align}
    where $B$ is a constant.
\end{theorem}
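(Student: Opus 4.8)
\textbf{Proof proposal for Theorem~\ref{theo:drift_bound}.}

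The plan is to follow the standard Lyapunov drift-bounding argument: start from the definition of the drift, expand the square, and bound the quadratic terms by a constant while keeping the linear terms exactly. First I would write out $L(\Theta(t+1)) - L(\Theta(t)) = \frac{1}{2}\sum_{i=1}^N\left[Z_i^2(t+1) - Z_i^2(t) + Q_i^2(t+1) - Q_i^2(t)\right]$, and then substitute the queue update rules from Eq.~\eqref{equ:update}. The key algebraic fact I would invoke is the elementary inequality $\left(\max\{a + b, 0\}\right)^2 \le a^2 + b^2 + 2ab = (a+b)^2$ — more precisely, for any real $a$ and $b$, $\left(\max\{a+b,0\}\right)^2 \le a^2 + 2ab + b^2$, so with $a = Z_i(t)$ and $b = m_{i,t}$ (recalling $m_{i,t} = x_{i,t} - x_{i^\star_t,t} - \delta$ from Eq.~\eqref{equ:m_n}) we get $Z_i^2(t+1) - Z_i^2(t) \le m_{i,t}^2 + 2Z_i(t)m_{i,t}$, and analogously $Q_i^2(t+1) - Q_i^2(t) \le n_{i,t}^2 + 2Q_i(t)n_{i,t}$ with $n_{i,t} = -x_{i,t} + x_{i^\star_t,t} - \delta$.

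Next I would sum over all clients $i \in \mathbb{N}$ and divide by $2$, yielding $L(\Theta(t+1)) - L(\Theta(t)) \le \frac{1}{2}\sum_{i=1}^N\left[m_{i,t}^2 + n_{i,t}^2\right] + \sum_{i=1}^N\left[Z_i(t)m_{i,t} + Q_i(t)n_{i,t}\right]$. The crucial observation is that the first sum is uniformly bounded: since $x_{i,t}, x_{i^\star_t,t} \in \{0,1\}$ and $\delta \ge 0$, each of $m_{i,t}$ and $n_{i,t}$ lies in a bounded interval (roughly $[-1-\delta,\, 1-\delta]$), so $m_{i,t}^2 + n_{i,t}^2 \le 2(1+\delta)^2$ or some similar explicit bound, and hence $\frac{1}{2}\sum_{i=1}^N\left[m_{i,t}^2 + n_{i,t}^2\right] \le N(1+\delta)^2 =: B$, a constant independent of $t$. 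Taking the conditional expectation $\mathbb{E}[\,\cdot \mid \Theta(t)]$ of both sides and noting that $B$ is deterministic while $Z_i(t), Q_i(t)$ are $\Theta(t)$-measurable gives exactly $\Delta(\Theta(t)) \le B + \sum_{i=1}^N\left[Z_i(t)m_{i,t} + Q_i(t)n_{i,t}\right]$, as claimed (with a harmless abuse of notation in that $m_{i,t}, n_{i,t}$ inside the expectation should strictly be their conditional expectations, but since the paper has already dropped $\mathbb{E}$ for a single round this matches the paper's convention).

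I do not expect a genuine obstacle here — the argument is routine. The only point requiring mild care is making the constant $B$ explicit and confirming it does not depend on $t$ or on the choice of subset $\mathbb{S}^t$; this hinges solely on the boundedness of the indicator variables $x_{i,t}$ and the fixed parameter $\delta$, both of which hold by construction. A secondary subtlety is the treatment of the $\max\{\cdot,0\}$ operator, but the inequality $\left(\max\{z,0\}\right)^2 \le z^2$ handles it cleanly, so no case analysis on whether the queue hits zero is needed.
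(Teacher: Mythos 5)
Your proposal is correct and follows essentially the same route as the paper: the inequality $\left(\max\{z,0\}\right)^2\le z^2$ applied to the queue updates (the paper isolates this as a separate lemma), expansion of the square, and absorption of the quadratic terms $\tfrac{1}{2}\sum_i\left[m_{i,t}^2+n_{i,t}^2\right]$ into the constant $B$. Your version is marginally more careful in making $B=N(1+\delta)^2$ explicit and in noting the conditional-expectation convention, but these are refinements of the same argument, not a different one.
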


\textbf{The Final Formulation.} Based on Theorem \ref{theo:drift_bound}, the problem P2 in Eq. \eqref{equ:P2} is transformed into 
\begin{align}
\label{equ:P2-1}
\min_{\mathbb{S}^t}\; (1-V)\sum_{i=1}^N \left[Z_i(t)m_{i,t} + Q_i(t)n_{i,t} \right]+V\cdot \text{DUB}(\mathbb{S}^t). 
\end{align}
Substituting $\text{DUB}(\mathbb{S}^t)$ by Eq. \eqref{equ:diversity_bound} and moving the sum sign outside of the minimum sign, the problem in Eq. \eqref{equ:P2-1} is equivalent to
\begin{align}
\label{equ:P3}
&\min_{\mathbb{S}^t}\; G(\mathbb{S}^t)=\sum_{i=1}^N \min_{j\in \mathbb{S}^t } \Big\{(1-V)\big[Z_i(t)m_{i,t}+ Q_i(t)n_{i,t}\big]  \\ \notag
&\text{(P3)}\qquad\qquad \qquad\qquad +V\cdot \big\Vert\nabla f_i(\boldsymbol{w}^t) - \nabla f_{j}(\boldsymbol{w}^t) \big\Vert \Big\}, 
\end{align}
which is the final optimization function. In the following, we select clients by minimizing $G(\mathbb{S}^t)$ in Eq. \eqref{equ:P3} in each round.

\section{The Proposed \emph{LongFed}}
\label{sec:strategy}

\subsection{The Client Selection Strategy}
The optimization problem in Eq. \eqref{equ:P3} is NP-hard as it involves calculating the value of $G(\mathbb{S}^t)$ for $\frac{N!}{K!(N-K)!}$ subsets, where $! $ denotes the factorial function \cite{krause2014submodular}. To address this issue, we exploit the submodular nature of $G(\mathbb{S}^t)$.

\begin{algorithm}[tb]
   \caption{The Proposed Client Selection Strategy}
   \label{alg:strategy}
\begin{algorithmic}[1]
   \State {\bfseries Input:} $Z_i(t)$, $Q_i(t)$
   \State {\bfseries Output:} the selected subset $\mathbb{S}^t$ 
   \State Initialize $\mathbb{S}_0^t=\emptyset$, $\mathbb{P}_0=\mathbb{N}$, and $e=1$.
   \For{ $k\in [0,K-1]$}
    \State Determine the reference client $i^\star$ for $i\in \mathbb{P}_{k}$
    \State Compute $m_{i,t}$ and $n_{i,t}$ for $i\in \mathbb{P}_{k}$
    \State Calculate $G(\mathbb{S}_{k}^t\cup \{i\}), \forall\, i\in \mathbb{P}_{k}$
    \State Identify the client $i_{\text{max}}=\text{argmax}_i \overline{G}(\mathbb{S}_{k}^t\cup \{i\})$
    \State $\mathbb{S}_{k+1}^t \leftarrow \mathbb{S}_{k}^t\cup i_{\text{max}}, \; \mathbb{P}_{k+1} \leftarrow \mathbb{P}_{k}\backslash i_{\text{max}}$
   \EndFor
\end{algorithmic}
\end{algorithm}

Specifically, a set function $g:2^\mathbb{N}\rightarrow \mathbb{R}$ is submodular if for every $A\subseteq B\subseteq \mathbb{N}$ and $i\in \mathbb{N} \backslash B$ it holds that $g(A\cup \{i\})-g(A) > g(B\cup \{i\})-g(B)$. One typical example of submodular function is the facility location function \cite{cornuejols1977uncapacitated}. Suppose we aim to select locations from a set of positions $\mathbb{N}=\{1,\ldots,N\}$ to open facilities and to serve a collection of $K$ users. If a facility is located at position $j$, the service it provides to user $i$ is quantified by $M_{i,j}$. Each user is assumed to select the facility with the highest service, and the total service provided to all users is modeled by the set function
\begin{align}
f(\mathbb{S}) = \sum_{i=1}^m \max_{j\in \mathbb{S}} M_{i,j},
\end{align}
where $f(\emptyset)=0$. If $M_{i,j}\geq 0$ for all $i,j$, then $f(\mathbb{S})$ is a monotone submodular function. By introducing an auxiliary element $e$, the set function $G(\mathbb{S}^t)$ in Eq. \eqref{equ:P3} is transformed into 
\begin{align}
    \label{equ:G_s0}
    \overline{G}(\mathbb{S}^t) = G(\{e\}) - G(\mathbb{S}^t\cup \{e\}),
\end{align}
which is a facility location function and has a submodular nature. $\overline{G}(\mathbb{S}^t)$ measures the decrease in the value of $G(\mathbb{S}^t)$ associated with the set $\mathbb{S}^t$ compared to that associated with just the auxiliary element $e$. Without loss of generality, we set the auxiliary element as $e=1$. Consequently, minimizing $G(\mathbb{S}^t)$ in Eq. \eqref{equ:P3} is equivalent to maximizing $\overline{G}(\mathbb{S}^t)$ in Eq. \eqref{equ:G_s0}.

Prior studies show that the greedy algorithm is an effective solution for finding the maximum value of a submodular function \cite{wolsey1982analysis,krause2014submodular}. Following this greedy algorithm, we propose a strategy to select clients in the $t$-th round, as outlined in Algorithm \ref{alg:strategy}. The proposed strategy starts with an empty set $\mathbb{S}_0^t=\emptyset$, and initializes a candidate set as $\mathbb{P}_0=\mathbb{N}$ (Line 3). In an iteration $k\in [0,K-1]$, we first determine the reference client $i^\star_t$ for client $i\in \mathbb{P}_{k}$, and compute $m_{i,t}$ and $n_{i,t}$ with $x_{i,t}=1$ if $i\in \mathbb{S}^t_{k}$, and similarly for $x_{i^\star_t,t}$ (Line 5-6). Next, we calculate $\overline{G}(\mathbb{S}_{k}^t\cup \{i\})$ for all clients $i\in \mathbb{P}_{k}$, and identify the client $i_{\text{max}}$ with the maximum value (Line 7-8). Subsequently, the client $i_{\text{max}}$ is removed from $\mathbb{P}_{k}$ and added to the subset $\mathbb{S}_{k}^t$ (Line 9). This iteration continues until $K$ clients are selected. The complexity of the algorithm is $O(KN)$.

Based on the client selection strategy in Algorithm \ref{alg:strategy}, we further present the proposed federated training algorithm, as outlined in Algorithm \ref{alg:FL}. 
First, the server initializes the queues $Z_i(0)$ and $Q_i(0)$, along with the model parameter $\boldsymbol{w}^0$ (Line 3). Subsequently, the server selects the subset of clients $\mathbb{S}^t$ (Line 5-9). If the communication round $t=0$, all clients are selected; otherwise, the server selects $K$ clients according to Algorithm \ref{alg:strategy}. The server then sends the model parameter $\boldsymbol{w}^t$ to the selected clients, and these clients perform local training and send the gradients back to the server (Line 10-11). The server updates $Z_i(t)$ and $Q_i(t)$ according to Eq. \eqref{equ:update}, and aggregates these results to obtain the model parameter $\boldsymbol{w}^{t+1}$ according to Eq. \eqref{equ:aggregate}. The iteration is repeated until completing the $T$ rounds. 


\begin{algorithm}[tb]
   \caption{The Federated Training Algorithm}
   \label{alg:FL}
\begin{algorithmic}[1]
   \State {\bfseries Input:} $\epsilon$, $\delta$, $V$ and $T$
   \State {\bfseries Output:} The trained model $\boldsymbol{w}^T$
   \State Initialize $\boldsymbol{w}^0$, $Z_i(0)=Q_i(0)=0$, 
   \For{$t\in [0,T]$}
   \If{$t=0$}
   \State Select all clients with $\mathbb{S}^t=\mathbb{N}$
   \Else
   \State Select K clients $\mathbb{S}^t$ according to Algorithm \ref{alg:strategy}
   \EndIf
   \State The server sends $\boldsymbol{w}^t$ to the selected clients in $\mathbb{S}^t$
   \State Clients train local models in parallel and send the gradients $\nabla f_i(\boldsymbol{w}^t)$ to the server
   \State The server update $Z_i(t+1)$, $Q_i(t+1)$
   \State The server aggregate the results and obtain $\boldsymbol{w}^{t+1}$
   \EndFor
\end{algorithmic}
\end{algorithm}

\subsection{Convergence Analysis} 
To implement the theoretical analysis, we establish six assumptions regarding the local models and data distribution heterogeneity among clients. The analysis uses FedAvg as the aggregation method, and it can be extended to other federated optimization methods as well. 

First, we assume that the estimation error between the client subset and the full client set (Eq. \eqref{equ:diversity}) is small and can be quantified by a variable $\rho$, as stated in Assumption 1.  Note that $\rho$ is used as a measure to characterize the quality of the estimation, and the analysis holds for any $\rho<\infty$. 

\textbf{Assumption 1}. At a round $t$, we assume that the gradient aggregated from the selected subset of clients can provide a good approximation of the gradient aggregated from the full set, i.e.,
\begin{align}
    \left\Vert\sum_{i\in \mathbb{N}} \nabla f_i(\boldsymbol{w}^t) - \sum_{j\in \mathbb{S}^t } \theta_j^t \nabla f_j(\boldsymbol{w}^t) \right\Vert \leq \rho.
\end{align}

Next, we outline the assumptions regarding local models $f_1,\cdots, f_N$ and their gradients $\nabla f_1(\boldsymbol{w}^t), \cdots, \nabla f_N(\boldsymbol{w}^t)$, as stated in Assumption 2-5. These assumptions are standard and widely used in the federated optimization literature \cite{li2020convergence,cho2020client,tang2022fedcor,balakrishnan2022diverse}. 

\textbf{Assumption 2}. $f_1,\cdots, f_N$ are all $L$-smooth. Formally, for all $\boldsymbol{v}$ and $\boldsymbol{w}$, we have 
\begin{align}
f_k(\boldsymbol{v})\leq f_k(\boldsymbol{w})+(\boldsymbol{v}-\boldsymbol{w})^T \nabla f_k(\boldsymbol{w}) + \frac{L}{2} \Vert \boldsymbol{v}-\boldsymbol{w} \Vert _2^2.
\end{align}

\textbf{Assumption 3}. $f_1,\cdots, f_N$ are all $\mu$-strongly convex. Formally, for all $\boldsymbol{v}$ and $\boldsymbol{w}$, it holds
\begin{align}
f_k(\boldsymbol{v})\geq f_k(\boldsymbol{w})+(\boldsymbol{v}-\boldsymbol{w})^T \nabla f_k(\boldsymbol{w}) + \frac{\mu}{2} \Vert \boldsymbol{v}-\boldsymbol{w} \Vert _2^2.
\end{align}

\textbf{Assumption 4}. The variance of gradients for $f_i$ is bounded for $i\in \mathbb{N}$. Formally, letting $\alpha_i^{t}$ be a data sample randomly chosen from the local dataset of the client $i$, we have
\begin{align}
\mathbb{E}\left[\Vert \nabla f_i(\boldsymbol{w}_i^t, \alpha_i^t)-\nabla f_i(\boldsymbol{w}_i^t)\Vert\right] \leq B_2.
\end{align}

\textbf{Assumption 5}. The expected squared norm of gradients is uniformly bounded, that is,
\begin{align}
\mathbb{E}\left[\Vert \nabla f_i(\boldsymbol{w}_i^t, \xi_i^t) \Vert \right]\leq B_3.
\end{align}

Furthermore, we introduce the term $\Gamma$ in Assumption 6 to quantify the data heterogeneity among clients. If the data distribution among clients is independently and identically distributed, then $\Gamma$ approaches zero as the number of clients grows. Conversely, if the data distribution is heterogeneous, the magnitude of $\Gamma$ reflects the degree of heterogeneity. 

\textbf{Assumption 6}. Let $f^\star$ and $f_i^\star$ be the minimum values of $f$ and $f_i$, respectively. We consider the degree of data heterogeneity to be bounded, that is,
\begin{align}
\Gamma = \Vert f^\star - \sum_{i=1}^N \theta_i f_i^\star \Vert \leq B_4.
\end{align}

Based on these assumptions, the proposed client selection strategy is demonstrated to converge to the global optimal parameter $\boldsymbol{w}^\star$ at a rate of $\mathcal{O}(1/t)$ for heterogeneous data settings, as stated in Theorem \ref{theo:convergence}. The proof is provided in the supplementary file. The convergence rate $\mathcal{O}(1/t)$ is the same as loss-guided client selection methods \cite{balakrishnan2022diverse,tang2022fedcor,cho2020client}.
 
\begin{theorem}
\label{theo:convergence}
Under Assumptions 1-6, we have 
\begin{align}
    \mathbb{E} \Vert \boldsymbol{w}^\star -\boldsymbol{w}^t\Vert_2^2 \leq \mathcal{O}(1/t) + \mathcal{O}(\rho).
\end{align}
\end{theorem}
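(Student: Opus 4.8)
\textbf{Proof proposal for Theorem~\ref{theo:convergence}.}

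The plan is to adapt the standard FedAvg convergence argument (as in \cite{li2020convergence}) to our setting, where the key deviation from vanilla FedAvg is that we aggregate a weighted subset gradient $\sum_{j\in\mathbb{S}^t}\theta_j^t\nabla f_j(\boldsymbol{w}^t)$ instead of the full aggregate $\sum_{i\in\mathbb{N}}\nabla f_i(\boldsymbol{w}^t)$. First I would write the one-step update as a perturbed full-participation step: define $\boldsymbol{e}^t \triangleq \sum_{i\in\mathbb{N}}\nabla f_i(\boldsymbol{w}^t) - \sum_{j\in\mathbb{S}^t}\theta_j^t\nabla f_j(\boldsymbol{w}^t)$, so that $\boldsymbol{w}^{t+1} = \boldsymbol{w}^t - \eta_t\big(\sum_{i\in\mathbb{N}}\nabla f_i(\boldsymbol{w}^t) - \boldsymbol{e}^t\big)$, and by Assumption~1 we have $\Vert\boldsymbol{e}^t\Vert\le\rho$. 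Expanding $\mathbb{E}\Vert\boldsymbol{w}^{t+1}-\boldsymbol{w}^\star\Vert_2^2$, the main cross term $-2\eta_t\langle\boldsymbol{w}^t-\boldsymbol{w}^\star,\sum_i\nabla f_i(\boldsymbol{w}^t)\rangle$ is handled exactly as in the classical analysis: $\mu$-strong convexity (Assumption~3) and $L$-smoothness (Assumption~2) give a contraction factor $(1-\mu\eta_t)$ on $\Vert\boldsymbol{w}^t-\boldsymbol{w}^\star\Vert_2^2$, while the stochastic-gradient variance (Assumption~4), bounded gradient norm (Assumption~5), and heterogeneity bound $\Gamma$ (Assumption~6) contribute the usual additive $\eta_t^2$ noise terms.

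The new ingredient is the perturbation $\boldsymbol{e}^t$: it produces a cross term $2\eta_t\langle\boldsymbol{w}^t-\boldsymbol{w}^\star,\boldsymbol{e}^t\rangle$ and a quadratic term $\eta_t^2\Vert\boldsymbol{e}^t\Vert_2^2$. I would bound the cross term using Young's inequality, e.g.\ $2\eta_t\langle\boldsymbol{w}^t-\boldsymbol{w}^\star,\boldsymbol{e}^t\rangle \le \mu\eta_t\Vert\boldsymbol{w}^t-\boldsymbol{w}^\star\Vert_2^2 + \frac{\eta_t}{\mu}\Vert\boldsymbol{e}^t\Vert_2^2 \le \mu\eta_t\Vert\boldsymbol{w}^t-\boldsymbol{w}^\star\Vert_2^2 + \frac{\eta_t}{\mu}\rho^2$, choosing the split so that the $\mu\eta_t\Vert\boldsymbol{w}^t-\boldsymbol{w}^\star\Vert_2^2$ piece is absorbed by (half of) the contraction margin coming from strong convexity; this is why the stepsize must satisfy $\eta_t\le c/\mu$ for a small constant $c$, and why only the constant hidden in $\mathcal{O}$ degrades rather than the rate. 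Collecting everything yields a recursion of the form $\mathbb{E}\Vert\boldsymbol{w}^{t+1}-\boldsymbol{w}^\star\Vert_2^2 \le (1-\tfrac{\mu\eta_t}{2})\,\mathbb{E}\Vert\boldsymbol{w}^t-\boldsymbol{w}^\star\Vert_2^2 + \eta_t^2 C_1 + \eta_t C_2\rho^2$ for constants $C_1,C_2$ depending on $L,\mu,B_2,B_3,B_4,N,K$.

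Finally I would unroll this recursion with the standard diminishing stepsize $\eta_t = \beta/(t+\gamma)$ for suitable $\beta>2/\mu$ and $\gamma>0$. The $\eta_t^2 C_1$ terms telescope to the familiar $\mathcal{O}(1/t)$ bound by an induction argument (showing $\mathbb{E}\Vert\boldsymbol{w}^t-\boldsymbol{w}^\star\Vert_2^2 \le \nu/(t+\gamma)$ for an appropriate $\nu$), exactly as in \cite{li2020convergence}. The $\eta_t C_2\rho^2$ terms, summed with weights coming from the contraction, behave like $\rho^2$ up to a constant — since $\sum$ of $(1-\mu\eta_k/2)$-discounted copies of $\eta_t$ is $\Theta(1/\mu)$ — giving the residual $\mathcal{O}(\rho)$ term (or $\mathcal{O}(\rho^2)$, which is absorbed into $\mathcal{O}(\rho)$ for $\rho$ bounded). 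Combining the two pieces yields $\mathbb{E}\Vert\boldsymbol{w}^\star-\boldsymbol{w}^t\Vert_2^2 \le \mathcal{O}(1/t)+\mathcal{O}(\rho)$. The main obstacle I anticipate is bookkeeping the heterogeneity term $\Gamma$ and the multi-local-step structure of FedAvg correctly alongside the new $\boldsymbol{e}^t$ perturbation — in particular making sure the $\boldsymbol{e}^t$ cross term is absorbed without inflating the $\mathcal{O}(1/t)$ rate, and tracking how $\theta_j^t=|\mathbb{C}_j^t|$ interacts with the weighting assumptions in the classical proof; the rest is routine once the perturbed-descent recursion is set up.
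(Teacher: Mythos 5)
Your proposal is correct in its overall strategy and lands on the same recursion shape as the paper, but the two arguments organize the perturbation differently. The paper does not fold the subset-approximation error into the gradient step; instead it introduces a virtual full-participation sequence $\bar{\boldsymbol{v}}^{t}$ (the average of all clients' one-step SGD iterates) and splits $\Vert\bar{\boldsymbol{w}}^{t+1}-\boldsymbol{w}^\star\Vert^2$ into three pieces: $A_1=\Vert\bar{\boldsymbol{w}}^{t+1}-\bar{\boldsymbol{v}}^{t+1}\Vert^2$, which isolates the subset-versus-full gap and is bounded by $LB_3E(E-1)\eta_{t_0}^2+E\rho\eta_{t_0}$ using Assumption~1 together with the drift of local iterates over the $E$ local epochs since the last aggregation at $t_0=t+1-E$; $A_2=\Vert\bar{\boldsymbol{v}}^{t+1}-\boldsymbol{w}^\star\Vert^2$, handled by the standard full-participation lemma of Li et al.\ to get the $(1-\eta_t\mu)$ contraction plus $\eta_t^2C_1$; and the cross term $A_3$, which is controlled by bounding $\mathbb{E}\Vert\bar{\boldsymbol{v}}^{t+1}-\boldsymbol{w}^\star\Vert$ by a constant $C_2$ (via $B_3/\mu$), yielding an additive term linear in $\rho$. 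Your route instead treats the subset error as a perturbation $\boldsymbol{e}^t$ of a single descent step and absorbs its cross term into half of the strong-convexity margin via Young's inequality; this avoids the bounded-iterates argument for $A_3$ and actually produces an $\mathcal{O}(\rho^2)$ residual, which is tighter than the paper's $\mathcal{O}(\rho)$ and still implies the stated bound. The one piece you defer to ``bookkeeping'' --- the multi-local-epoch structure --- is precisely where the paper spends most of its effort ($A_1$ requires summing the per-step gradient discrepancies over $\tau\in(t_0,t]$, which is where the $E(E-1)\eta_{t_0}^2$ and $E\rho\eta_{t_0}$ factors come from), so you should be aware that for $E>1$ your clean one-line perturbed update does not literally hold and the virtual-sequence device (or an equivalent) becomes necessary; for $E=1$ your argument is complete as sketched.
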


In Eq.~\eqref{equ:diversity}, the term $\rho$ encodes the estimation error. When more clients are selected, the term $\rho$ is smaller. Particularly, $\rho$ becomes zero when all clients are selected, i.e., $K=N$. In practical settings, as only limited clients can be selected, $\rho$ remains a non-vanishing term. In our experiments, we also observe that there exists a non-diminishing solution bias dependent on $\rho$. This observation is consistent with our theoretical analysis. The impact of varying $K$ is also empirically analyzed in~\cite{li2024adafl}.

\section{Experiment}
\label{sec:test}

\subsection{Experimental Settings}
\label{sec:setting}

We consider the cross-device federated learning scenario where $N=100$ clients exist, each with limited computational power.  The FedAvg method~\cite{mcmahan2017communication} is used as the aggregation method. We evaluate our method on two datasets, FMNIST \cite{xiao2017fashion} and CIFAR-10 \cite{krizhevsky2009learning}. Following~\cite{tang2022fedcor}, we opt for basic and small-scale models to accommodate the clients' restricted computational resources. Specifically, for the FMNIST dataset, we utilize a multilayer perception (MLP) with two hidden layers. For CIFAR-10, we adopts a convolutional neural network (CNN) architecture consisting of three convolutional layers. Details of training and hyperparameter settings can be found in supplementary file. 

\textbf{Data Partition Methods.} We explore four data partitioning methods. The first is an independently and identically distributed (\textbf{IID}) approach, where we randomly partition the dataset into $N$ parts, with each client assigned one part. We also consider three heterogeneous data partitioning approaches as follows.

\textbf{(i) 1 Shard Per Client (1SPC)}. Following~\cite{mcmahan2017communication}, we divide the dataset into $N$ shards, ensuring that data within a shard shares the same label. We randomly assign a shard to each client so that a client has data with only one label. In this case, we select $K=10$ clients in each round. 

\textbf{(ii) 2 Shards Per Client (2SPC)}. The dataset is partitioned into $2N$ shards, with each shard containing data that shares the same label. Clients are randomly assigned two shards, allowing them to have data with as most two distinct labels. In this case, we set  $K=5$ in each round.

\textbf{(iii) Dirichlet Distribution (Dir)}. We partition the dataset based on a Dirichlet distribution parameterized by a concentration variable $\alpha$, where a smaller value of $\alpha$ indicates higher heterogeneity. In our work, we set $\alpha=0.8$, and determine the data size of each label for each client following~\cite{hsu2019measuring,tang2022fedcor}. In this case, we choose $K=5$ clients in each round. We also experimented with smaller values, such as $\alpha=0.5$ and $\alpha=0.1$. We found that our method consistently achieved the best model performance. 

Note that the 1SPC and 2SPC scenarios address label shift heterogeneity, while the Dir scenario accounts for heterogeneity in both labels and dataset size.



\begin{figure}[tbp]
    \centering
    \includegraphics[width=6.5cm]{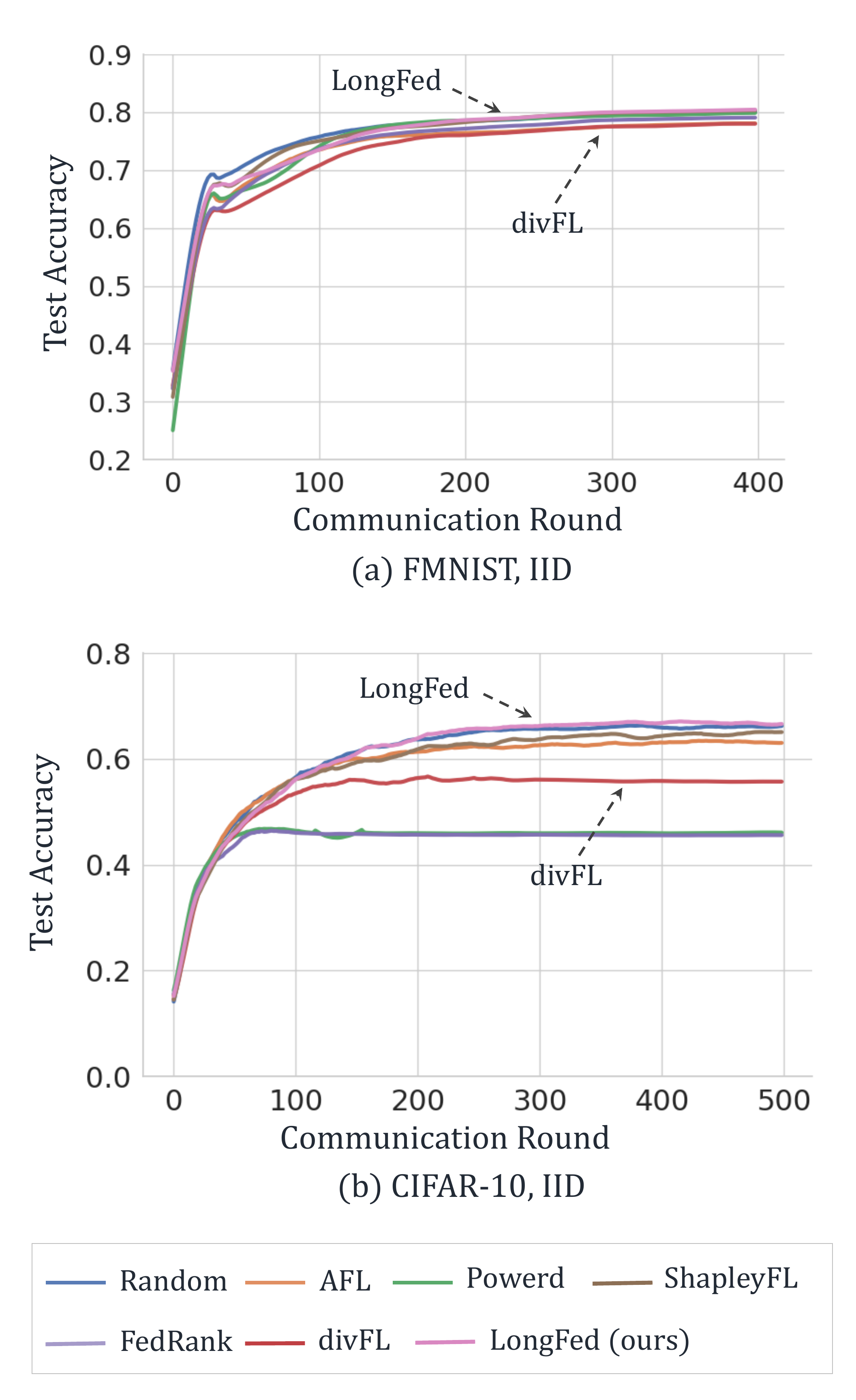}
    \caption{Test accuracy under the IID scenario.}
    \label{fig:convergence_iid}
\end{figure}

\textbf{Comparison Methods.} We first evaluate the effectiveness of the proposed method by comparing it with existing client selection strategies in terms of model performance. The baselines are as follows. (1) Random selection strategy (Random, 2017)~\cite{mcmahan2017communication}; (2) Loss-guided selection methods, including active client selection strategy (AFL, 2019)~\cite{goetz2019active}, power-of-choice selection strategy (Powerd, 2022)~\cite{cho2022towards}, and diverse client selection strategy (divFL, 2022)~\cite{balakrishnan2022diverse}; (3) Contribution-based methods, including the Shapley value-based method (ShapleyFL, 2023)~\cite{sun2023shapleyfl} and ranking-based client selection (FedRank, 2024)~\cite{tian2024ranking}. Cluster-based methods are not considered, as they require explicit clustering patterns, which are not applicable in the 2SPC and Dir scenarios. Additionally, we include full participation method (Full) with $K=100$ as a reference.

Furthermore, we evaluate the proposed method in terms of fairness. As introduced in Section \ref{sec:review}, the uniform selection constraint from \cite{shi2023fairness} (2023) in performance fairness research is most aligned with our work. In our study, we adopt this uniform selection constraint and apply it to loss-guided selection methods, given their superior model performance. Specifically, we apply the uniform selection constraint to AFL~\cite{goetz2019active}, Powerd~\cite{cho2022towards}, and divFL~\cite{balakrishnan2022diverse}, denoting them as AFL+Fair, Powerd+Fair, and divFL+Fair, respectively. We then compare our method against these three baselines in terms of fairness. 

\begin{figure*}[tbp]
    \centering
    \includegraphics[width=16cm]{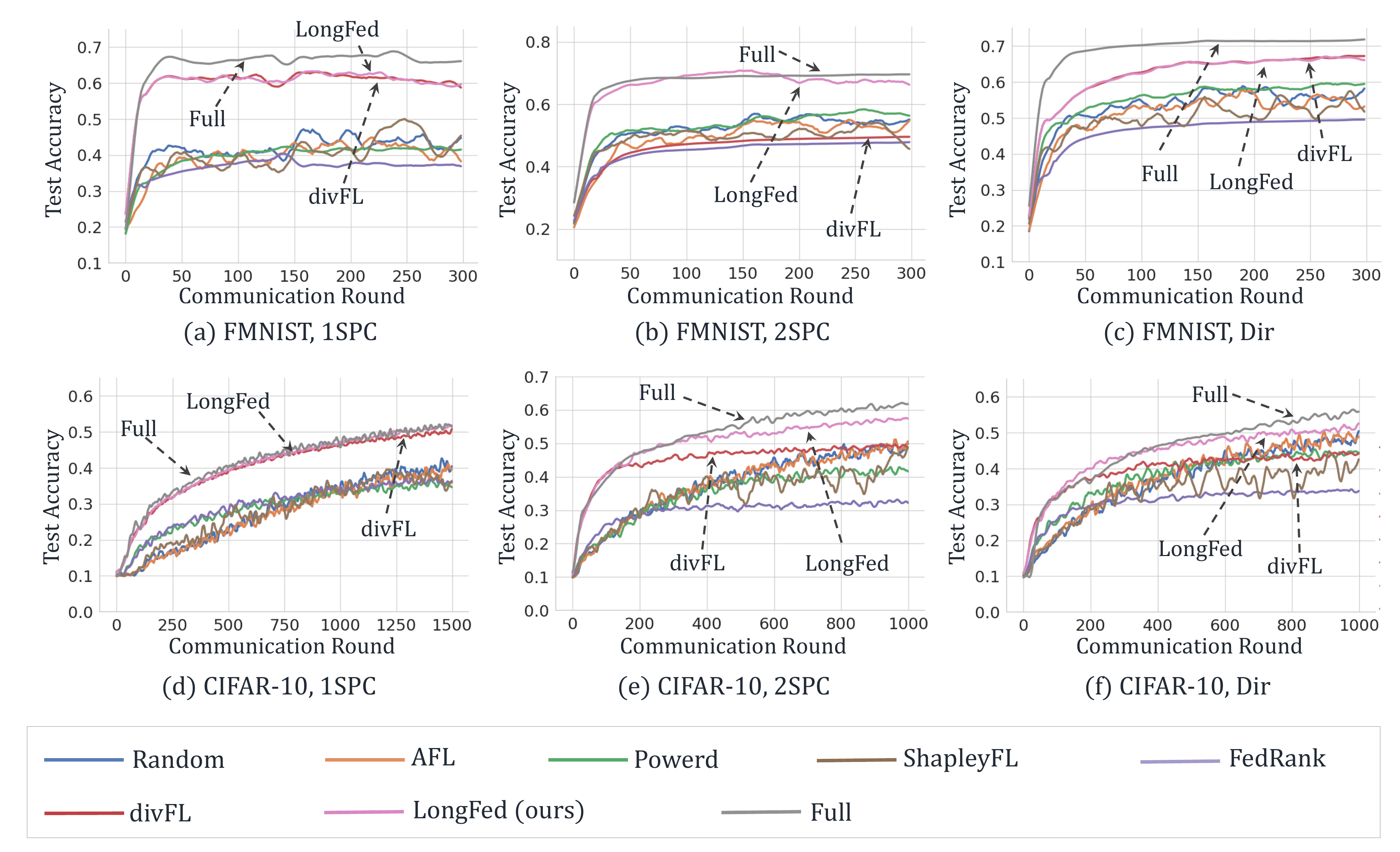}
    \caption{Test accuracy on FMNIST and CIFAR-10 under three heterogeneous data partitioning settings.}
    \label{fig:convergence_noniid}
\end{figure*} 

Note that the divFL~\cite{balakrishnan2022diverse} method also selects clients to best represent full client participation, which aligns with our proposed Principle I. However, divFL does not consider fairness. By comparing divFL with our proposed  \emph{longFed}, we can evaluate the impact of our proposed individual fairness on improving model performance. Additionally, by comparing divFL+Fair with \emph{longFed}, we can further evaluate how our proposed individual fairness outperforms the uniform selection constraint. 


\subsection{Experimental Results}
\label{sec:exp_result}

\subsubsection{Model Performance} 

The experimental results under the IID scenario are presented in Fig.~\ref{fig:convergence_iid}. First, as shown in Fig.~\ref{fig:convergence_iid} (a) and (b), our proposed \emph{longFed} outperforms existing methods, particularly on CIFAR-10 dataset, validating its effectiveness in enhancing model performance in the IID scenario. Second, \emph{longFed} consistently outperforms divFL, further demonstrating the effectiveness of our proposed individual fairness in the IID scenario.  

\begin{figure}[tbp]
    \centering
    \includegraphics[width=7.5cm]{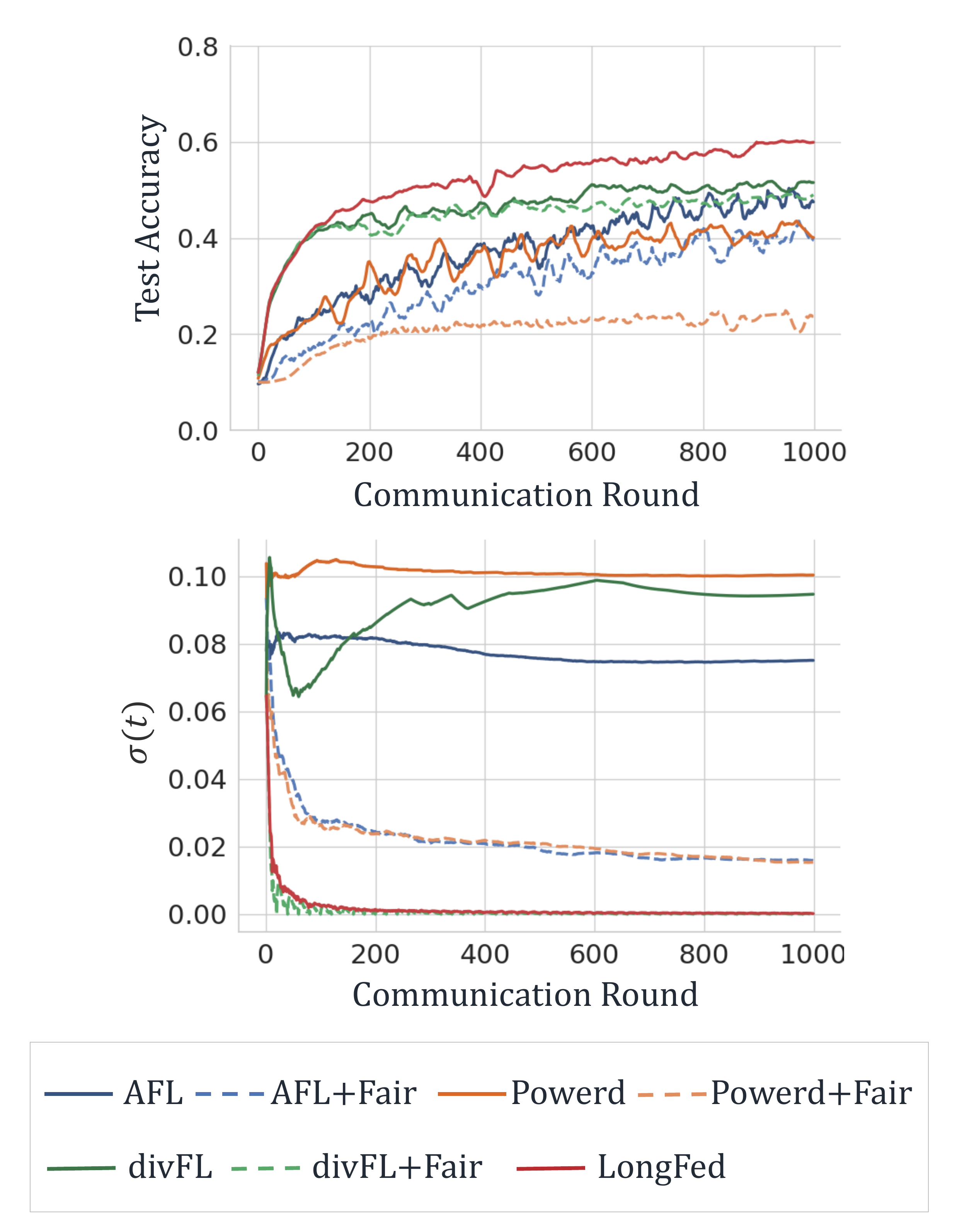}
    \caption{Fairness results on the CIFAR-10 dataset in the 2SPC scenario.}
    \label{fig:fairness_cifar}
\end{figure}

The experimental results under three heterogeneous scenarios are presented in Fig.~\ref{fig:convergence_noniid}. First, compared to prior works (except for divFL), the proposed \emph{longFed} exhibits faster convergence and superior test accuracy. Notably, it achieves an approximate $20\%$ improvement in the 1SPC scenario and an $8\%$ improvement in the 2SPC and Dir scenarios on the FMNIST dataset. Besides, the proposed method consistently achieves performance comparable to full client participation across all three scenarios and both datasets. This validates the effectiveness of our method in enhancing model performance. Furthermore, compared to divFL, \emph{longFed} exhibits similar performance in Fig.~\ref{fig:convergence_noniid} (a), (c), and (d), but achieves significant improvements in the other three scenarios. This highlights the effectiveness of our proposed individual fairness in improving model performance. 

Notably, in Fig.~\ref{fig:convergence_noniid} (b) and (f), \emph{longFed} even surpasses full client participation in some rounds. This phenomenon typically occurs in the early stages of training due to optimization dynamics and the stochasticity introduced by client selection. As training progresses, these effects gradually diminish. Additionally, FedRank performs the worst, as it relies solely on pairwise relationships between clients and lacks a global perspective on data diversity. This leads to biased selections-for example, it tends to overlook important yet less frequently ranked clients-ultimately resulting in suboptimal training and poorer convergence.

\subsubsection{Fairness}
We evaluate the fairness through the standard deviation in the selection probability, 
denoted as $\sigma(t)$. 
To calculate $\sigma(t)$, we define $c_i(t)$ as the accumulated number of selections for client $i$ during previous $t$ rounds. That is, $c_i(t)=\sum_{k=1}^t x_{i,k}$. For each client $i$, we identify clients whose similarity with it is less than $\epsilon$, denoted as $\mathbb{I}_i(t) = \{j \in \mathbb{N} \mid \text{Dist}_{i,j}(t) < \epsilon\}$. We then compute the average selection count among clients in $\mathbb{I}_i(t)$ by
\begin{align}
\label{equ:select_avg}
\overline{c}_i(t) = \frac{1}{|\mathbb{I}_i(t)|} \sum_{j \in \mathbb{I}_i(t)} c_j(t),
\end{align}
and the standard deviation in their selection counts by
\begin{align}
\label{equ:select_std}
\sigma(t) = \sqrt{\frac{1}{N}\sum_{i=1}^N \left[c_i(t) - \overline{c}_i(t)\right]^2}.
\end{align}
A smaller $\sigma(t)$ indicates that the client selection strategy aligns more closely with the individual fairness constraint. 

We compare the proposed method with three baselines: AFL~\cite{goetz2019active}, Powerd~\cite{cho2022towards}, and divFL~\cite{balakrishnan2022diverse}, along with their respective versions incorporating the uniform selection constraint, denoted as AFL+Fair, Powerd+Fair, and divFL+Fair. These methods are evaluated based on two key aspects: test accuracy, and the standard deviation $\sigma(t)$. The analysis is conducted on both the CIFAR-10 and FMNIST datasets under the 2SPC scenario, with similar trends observed in other settings. The results of the CIFAR-10 and FMNIST datasets are illustrated in Fig.~\ref{fig:fairness_cifar} and Fig.~\ref{fig:fairness_fmnist}, respectively. In these figures, the vanilla methods are represented by solid lines, while the methods incorporating the uniform selection constraint are indicated by dashed lines. 

\begin{table*}[tbp]
  \centering
  \caption{Time overhead analysis on FMNIST in the 1SPC scenario.}
    \begin{tabular}{cccccccc}
    \toprule
    Method & Random & AFL   & Powerd & ShapleyFL & FedRank &divFL & LongFed \\
    \midrule
    Time (ms) for client selection &  0.038     &  0.153    &  0.016  & 221.023  &0.057 &0.219  &  0.529 \\
    Time (ms) for a complete round &  2019     &   2044    &  2033   &3242  &2045  &2407   &  2472  \\
    \bottomrule
    \end{tabular}%
  \label{tab:run_time}%
\end{table*}%

In Fig.~\ref{fig:fairness_cifar}, the test accuracy results indicate that the proposed method achieves the best performance. However, applying the uniform selection constraint leads to a degradation in model performance for all three baselines. On the other hand, from the perspective of fairness, incorporating the uniform selection constraint reduces the standard deviation  $\sigma(t)$ for all three baseline methods. Notably, divFL+Fair achieves the lowest $\sigma(t)$ among all methods. The proposed methods attains a comparable but slightly higher standard deviation than divFL+Fair. This phenomenon occurs because the uniform selection constraint enforces equal selection probabilities across all clients, whereas the proposed method ensures similar selection probabilities only for clients with similar data distributions. As a result, divFL+Fair exhibits slightly better fairness performance than our method. Considering both test accuracy and fairness (as measured by $\sigma(t)$), the proposed individual fairness approach demonstrates a superior balance, effectively improving both model performance and fairness simultaneously compared to the uniform selection constraint. 

\begin{figure}[tbp]
    \centering
    \includegraphics[width=7.5cm]{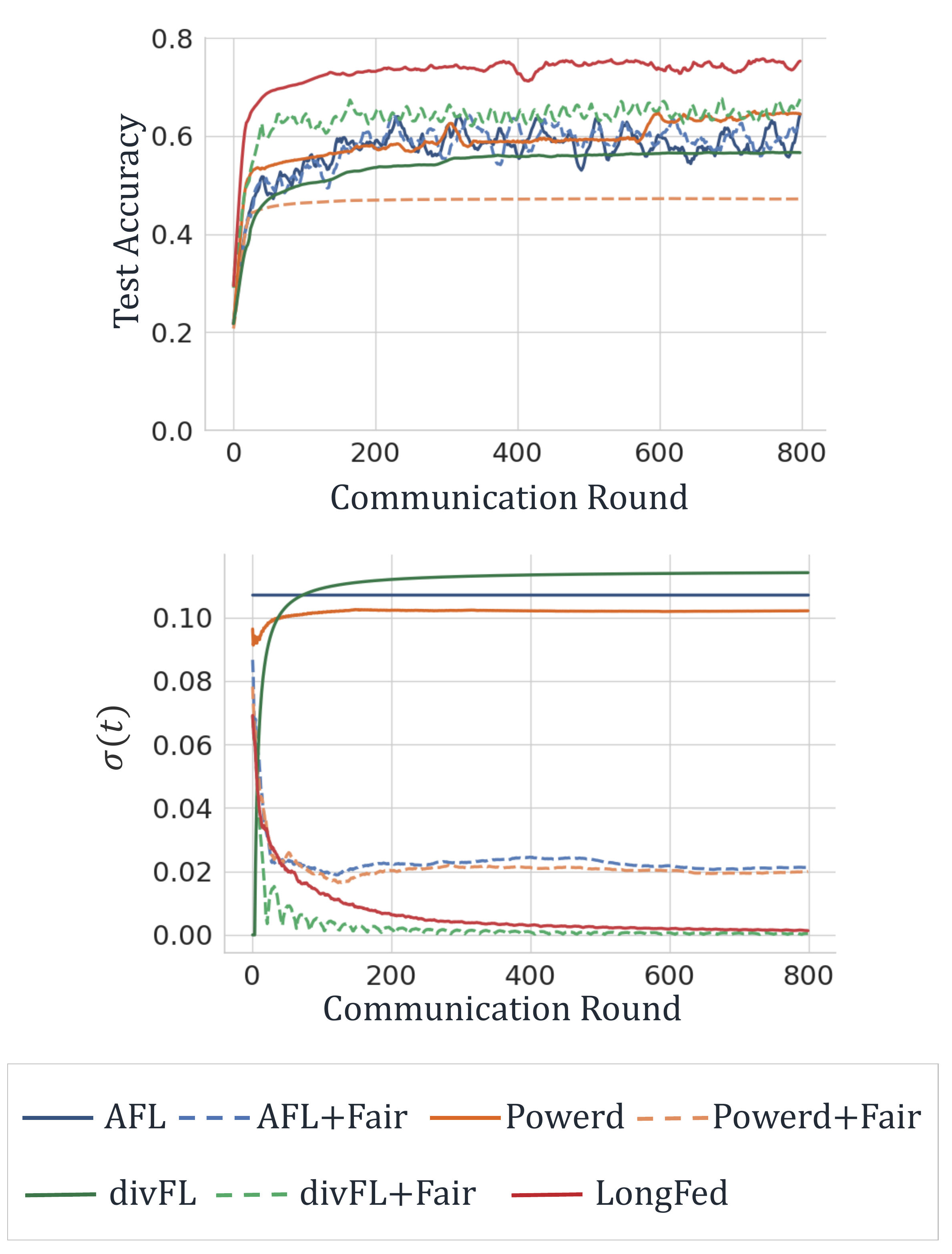}
    \caption{Fairness results on the FMNIST dataset in the 2SPC scenario. }
    \label{fig:fairness_fmnist}
\end{figure}

In Fig.~\ref{fig:fairness_fmnist}, the results for the standard deviation $\sigma(t)$ follow a similar trend to that of the CIFAR-10 dataset. That is, introducing the uniform selection constraint improve fairness, and the proposed \emph{LongFed} achieves fairness comparable to divFL+Fair. For test accuracy, we observe that applying the uniform selection constraint leads to a decline in performance for Powerd, while AFL maintains similar performance, and divFL exhibits a slight improvement. This may be because these methods typically select the same subset of clients across multiple rounds, potentially leading the system to a suboptimal solution. The introduction of the uniform selection constraint forces these methods to select different clients, particularly those that were previously under-selected, helping to escape the suboptimal state. Most importantly, the proposed \emph{LongFed} still achieves the best overall performance, demonstating its effectivess in balancing both fairness and model accuracy.

\subsubsection{Time Overhead Analysis}

We evaluate the time overhead of the proposed client selection strategy, and the results are presented in Table \ref{tab:run_time}. 
The analysis is conducted on the FMNIST dataset under the 1SPC scenario, with similar trends observed across other scenarios. 
In Table~\ref{tab:run_time}, the first row highlights the time dedicated solely to client selection, while the second row denotes the overall time required for a complete round, including client selection, local updates, and global aggregation. 

First, examining the time specifically for client selection, our proposed method exhibits only a marginal increase (less than 0.4ms) compared to existing methods. This slight increase is primarily due to the computational cost of evaluating the distance $\text{Dist}_{i,j}(t)$ in Eq.~\eqref{equ:simi}. However, this can be mitigated by employing more efficient distance computation techniques in high-dimensional gradient space. More importantly, the time required for client selection (approximately 0.5ms) is negligible compared to the total time required to complete a round (approximately 2000ms). Therefore, the marginal increase in time for the proposed strategy is justified, given its superior improvements in model performance and fairness. 

Additionally, ShapleyFL~\cite{sun2023shapleyfl} incurs significantly higher selection time due to the computationally expensive process of  Shapley values, which involves combinatorial computations. 

\subsection{Visualization of Client Selection Strategy}
\label{sec:visual}

\begin{figure*}[tbp]
    \centering
    \includegraphics[width=18cm]{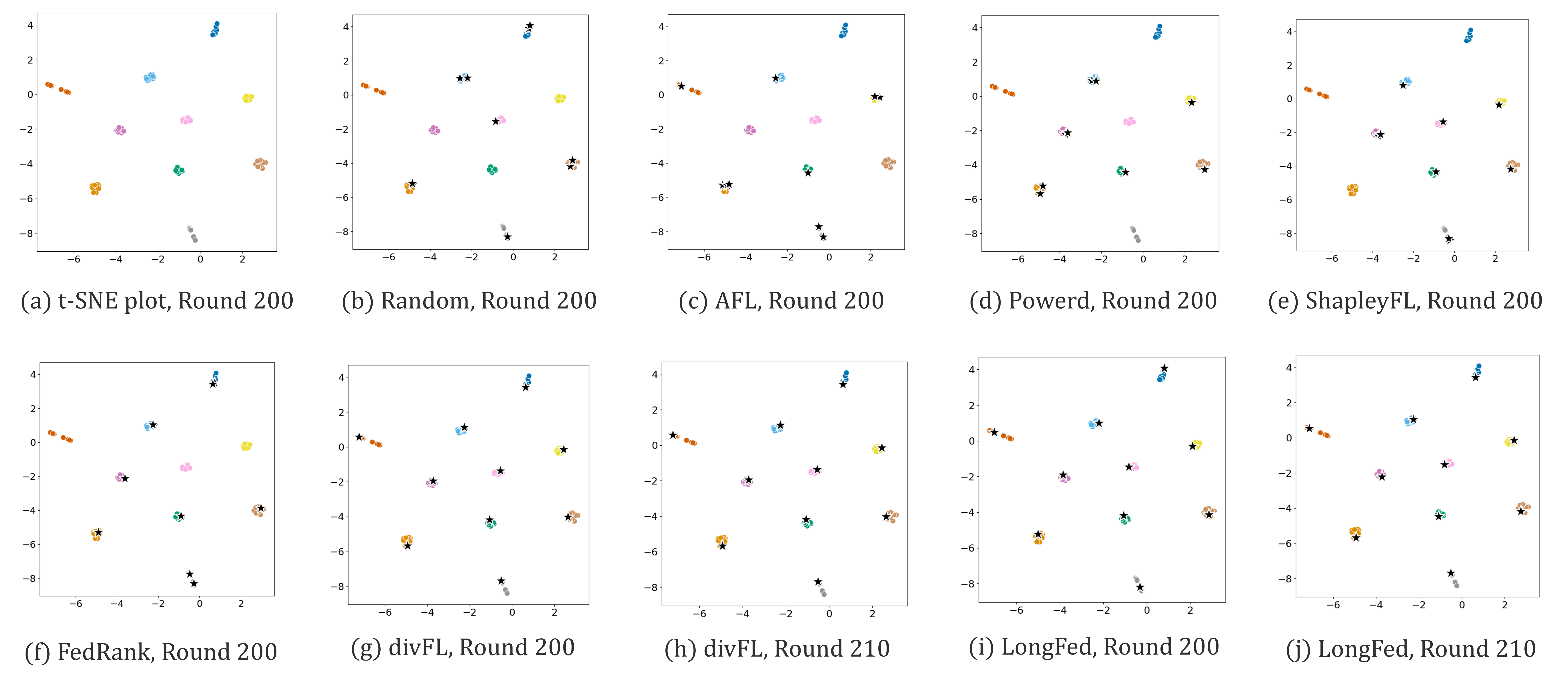}
    \caption{Visualization of the selected clients on the FMNIST dataset under the 1SPC scenario. }
    \label{fig:visual_fmnist_1spc}
\end{figure*}

\begin{figure*}[tbp]
    \centering
    \includegraphics[width=18cm]{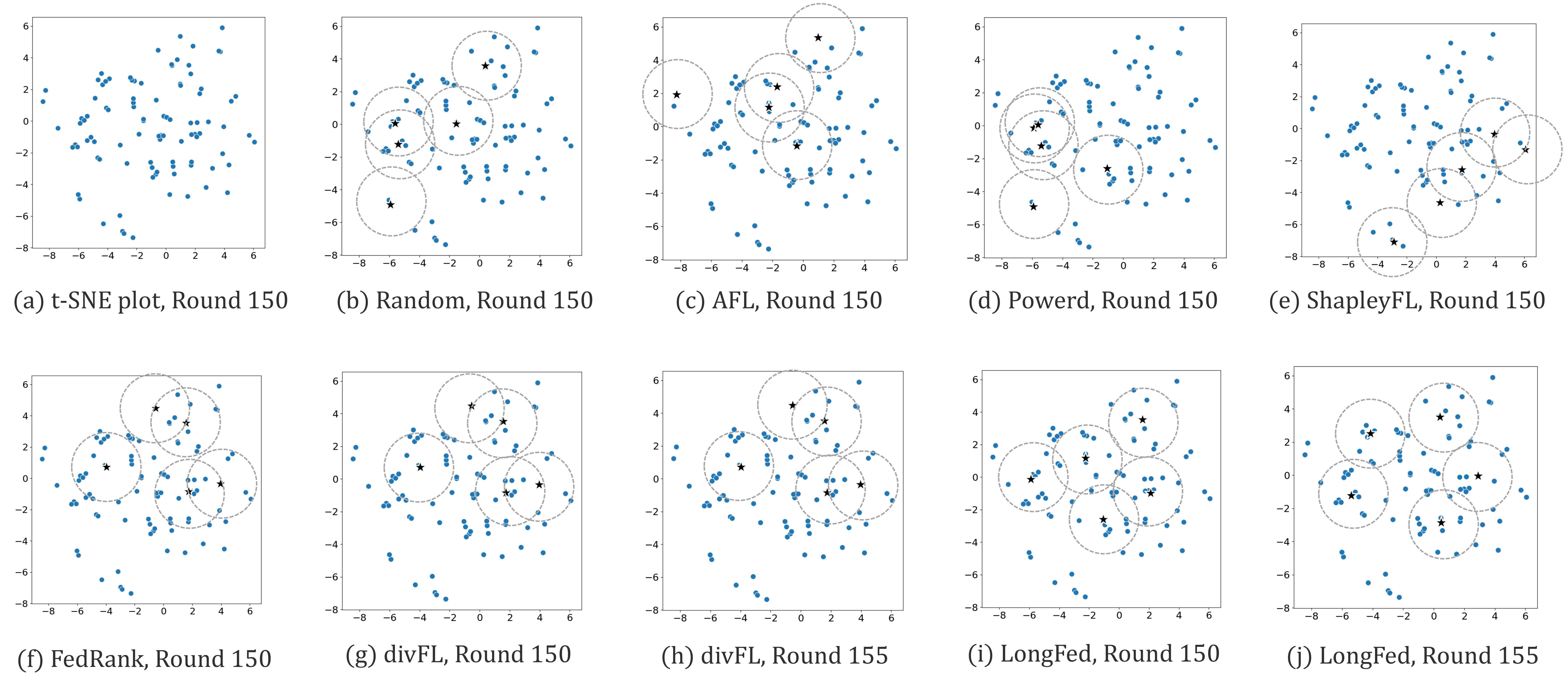}
    \caption{Visualization of the selected clients on the FMNIST dataset under the 2SPC scenario.}
    \label{fig:visual_fmnist_2spc}
\end{figure*}

We provide visualizations of selection results to offer an interpretable analysis. In Fig.~\ref{fig:visual_fmnist_1spc}, (a) is the t-SNE plot of client embeddings, where clients are organized into 10 clusters. (b)-(j) display the results of the selected clients using different client selection strategies, with the chosen clients marked by black stars. As shown in Fig.~\ref{fig:visual_fmnist_1spc} (b)-(f), baseline methods often select two or more clients from the same cluster. In contrast, as shown in Fig.~\ref{fig:visual_fmnist_1spc} (i), the proposed \emph{LongFed} selects one client from each cluster, effectively approximating the data distribution of the full client set. Additionally, as shown in Fig.~\ref{fig:visual_fmnist_1spc} (g) and (h), divFL tends to select the same set of clients across successive rounds. In contrast, as shown in Fig.~\ref{fig:visual_fmnist_1spc} (i) and (j), the proposed \emph{LongFed} selects a more diverse subset of clients across multiple rounds, ensuring fairness in multi-round selection.

Results in the 2SPC scenario are illustrated in Fig.~\ref{fig:visual_fmnist_2spc}. (a) shows the t-SNE plot of client embeddings, where the clustering pattern is not evident. In (b)-(j), circles are drawn around the selected clients, indicating that clients covered by a circle can be represented by the corresponding selected client. For baseline methods, as shown in Fig.~\ref{fig:visual_fmnist_2spc} (b)-(e), the circles around selected clients often overlap and cover only a minority of clients. In our proposed method, as shown in Fig.~\ref{fig:visual_fmnist_2spc} (i), these circles cover the majority of clients. This suggests that the selected clients provide a better approximation of the full client set compared to the baseline approaches. 

both FedRank and divFL repeatedly select the same subset of clients. Notably, this subset remains unchanged after 55 rounds and continues to be selected across nearly 250 rounds. As a result, the global model is trained on a limited set of clients, leading to biased predictions for the remaining clients. This issue is corroborated by the test accuracy results in Fig.\ref{fig:convergence_noniid} (b), where divFL and FedRank exhibit the worst performance. Additionally, as shown in Fig.~\ref{fig:fairness_fmnist}, divFL has the highest standard deviation $\sigma(t)$ among the baselines, further highlighting its fairness limitations. In contrast, \emph{LongFed} maintains a more diverse client selection over multiple rounds, ensuring both fairness and improved generalization. These findings are consistent with the observations in the 1SPC scenario. Additional visualizations for other scenarios are provided in the supplementary file.

Moreover, as shown in Fig.~\ref{fig:visual_fmnist_2spc} (f)-(h), FedRank and divFL select the same subset of clients. Notably, this subset remains unchanged after 55 rounds and continues to be selected across nearly 250 rounds for both FedRank and divFL. As a result, the global model is trained only on the 5 selected clients, leading to biased predictions for the remaining clients. This issue aligns with the test accuracy results in Fig.\ref{fig:convergence_noniid} (b), where divFL and FedRank exhibit the worst performance. Additionally, as shown in Fig.~\ref{fig:fairness_fmnist}, divFL has the highest standard deviation $\sigma(t)$ among the baselines, further highlighting its fairness limitations. In contrast, \emph{LongFed} maintains a more diverse client selection over multiple rounds, ensuring both fairness and improved generalization. These findings are consistent with the observations in the 1SPC scenario. Additional visualizations for other scenarios are provided in the supplementary file. 

\subsection{Individual Fairness Analysis}
\label{sec:parameter_selection}
We analyze the $\epsilon\text{-}\delta$-individual fairness in terms of the trade-off factor $V$, the similarity measure $\epsilon$, and the probability difference measure $\delta$. The results are illustrated using the 2SPC scenario on the CIFAR-10 dataset, and similar trends are observed for other scenarios. We consider two metrics: test accuracy, and the standard deviation $\sigma(t)$ in Eq.~\eqref{equ:select_std}. 

\begin{figure}[tbp]
    \centering
    \includegraphics[width=8.5cm]{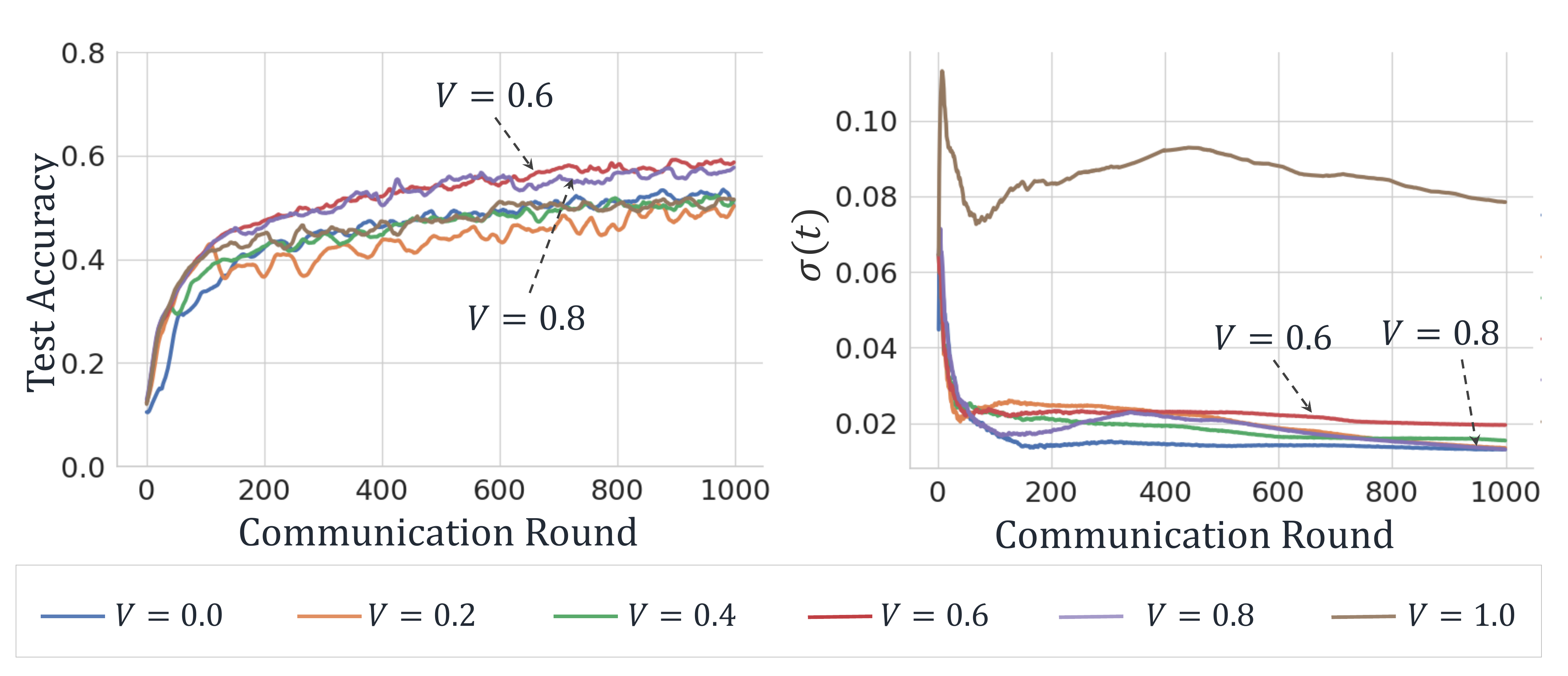}
    \caption{Impact of the trade-off factor $V$.}
    \label{fig:V}
\end{figure}

\textbf{Trade-Off Factor $V$.} We vary the value of $V$ from $0.0$ to $1.0$, where a smaller $V$ indicates a higher priority for the individual fairness constraint. The results are presented in Fig.~\ref{fig:V}. First, we observe that the proposed method achieves the highest test accuracy when $V=0.6$ and $V=0.8$. Second, in terms of fairness, the proposed method exhibits significantly larger standard deviation $\sigma(t)$ when $V=1.0$. This is because when $V=1.0$, the method solely prioritizes minimizing the estimation error in Eq.~\eqref{equ:diversity} while neglecting the fairness constraint. Furthermore, the standard deviation $\sigma(t)$ decreases significantly when $V\leq 0.8$. Considering both model performance and fairness, we recommend setting $V=0.8$ as the optimal choice.

\begin{figure}[tbp]
    \centering
    \includegraphics[width=8.5cm]{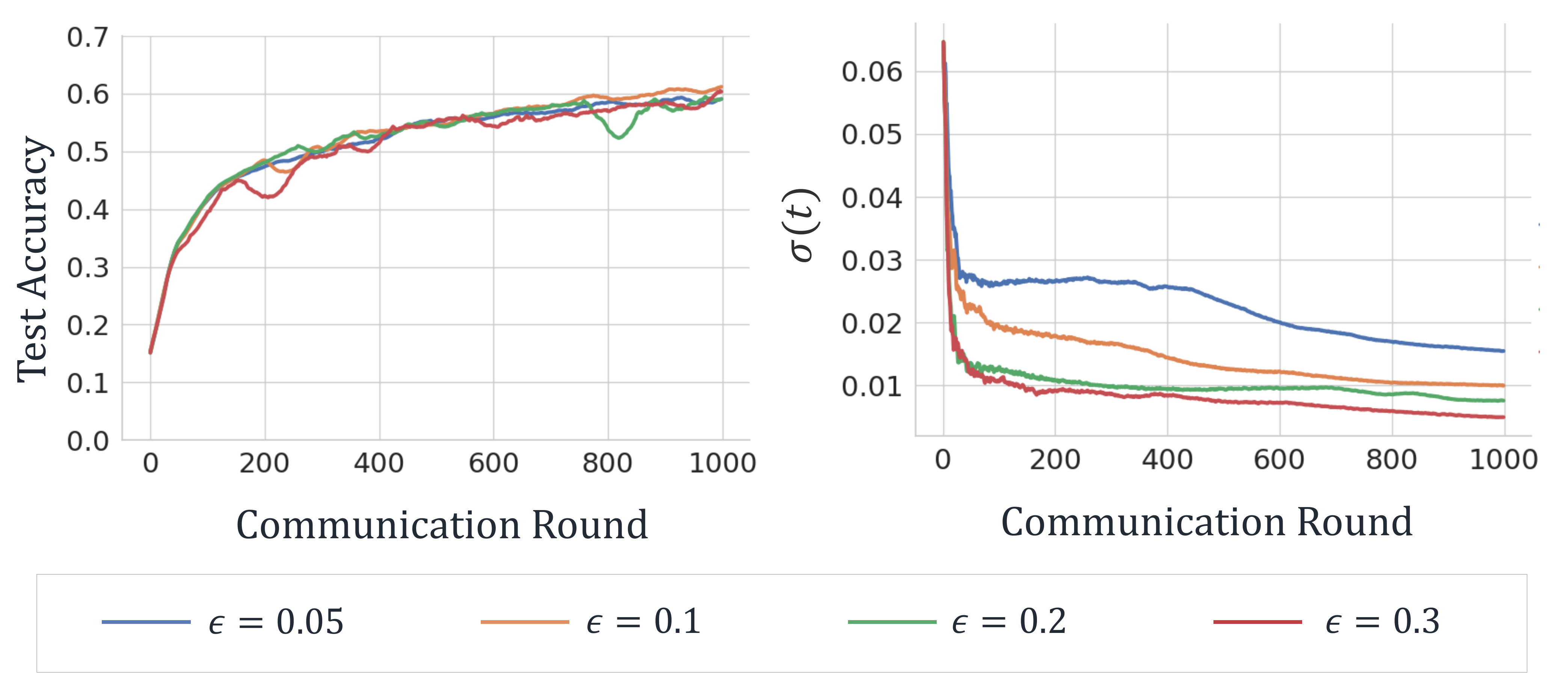}
    \caption{Impact of the similarity measure $\epsilon$.}
    \label{fig:epsilon}
\end{figure}

\textbf{Similarity Measure $\epsilon$.} We vary the value of $\epsilon$ across $0.05$, $0.1$, $0.2$, and $0.3$, and evaluate the corresponding test accuracy and standard deviation, as shown in Fig.~\ref{fig:epsilon}. First, we observe that test accuracy remains relatively consistent and is not particularly sensitive to the choice of $\epsilon$. However, a larger $\epsilon$ results in a smaller standard deviation. This is because $\epsilon$ defines the similarity threshold for grouping clients, and a larger value considers more clients as similar, leading to reduced variability in selection. Considering both model performance and fairness, we recommend setting $\epsilon=0.3$ as the optimal choice.

\begin{figure}[tbp]
    \centering
    \includegraphics[width=8.5cm]{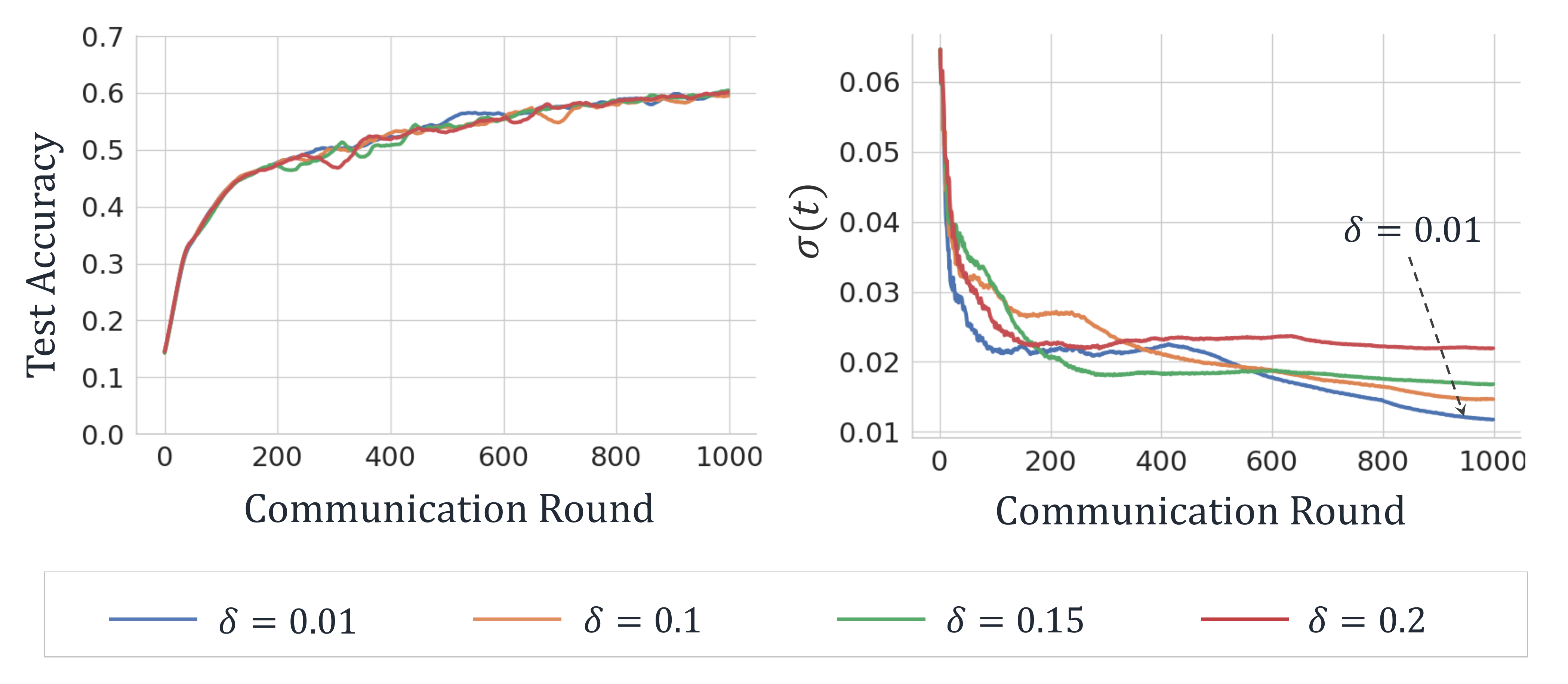}
    \caption{Impact of the probability difference measure $\delta$.}
    \label{fig:delta}
\end{figure}

\textbf{Probability Difference Measure $\delta$.} We vary the value of $\delta$ across $0.01$, $0.1$, $0.15$, and $0.2$, and evaluate the corresponding test accuracy and standard deviation, as shown in Fig.~\ref{fig:delta}. The results indicate that test accuracy remains relatively stable and is not sensitive to the choice of $\delta$. However, a smaller $\delta$ leads to a lower standard deviation. This is because $\delta$ defines the allowable difference in selection probabilities between clients with similar distributions. A smaller $\delta$ enforces a stricter constraint, resulting in a smaller standard deviation. Given the balance between model performance and fairness, we recommend setting $\delta=0.01$ as the best choice.

\subsection{Summary}
In summary, the proposed \emph{LongFed} demonstrates faster convergence and superior test accuracy, effectively enhancing both model performance and fairness simultaneously. Moreover, \emph{LongFed} consistently selects representative clients to approximate full participation across multiple rounds, regardless of whether a clustering pattern exists among clients.

\section{Conclusion}

In this work, we focus on the client selection problem in federated learning, and propose an effective and fair selection method to improve both model performance and fairness. To achieve this, we introduce two guiding principles and formulate the client selection problem as a long-term optimization task. Experiments show that our method effectively guides the system to converge along a trajectory similar to that of full client participation. Visualization results further illustrate that our approach increases data diversity by selecting clients based on their data distributions, thereby improving both model performance and fairness.

\bibliographystyle{IEEEtran}
\bibliography{refs}

\newpage

\onecolumn

\section{Additional Details of The Proposed Optimization Function}
\label{sec:apdx}

\subsection{Proof of Theorem \emph{1}}
\label{apd:proof_diversity_bound}
\begin{proof}
Following the analysis in \cite{mirzasoleiman2020coresets}, based on the mapping $\xi^t:\mathbb{N}\rightarrow \mathbb{S}^t$ that assigns each client $i\in\mathbb{N}$ to a client $j\in\mathbb{S}^t$, we have 
\begin{align}
    \sum_{i\in\mathbb{N}} \nabla f_i(\boldsymbol{w}^t) 
    &\quad = \sum_{i\in\mathbb{N}} \left[\nabla f_i(\boldsymbol{w}^t) - \nabla f_{\xi^t(i)}(\boldsymbol{w}^t) + \nabla f_{\xi^t(i)}(\boldsymbol{w}^t)\right] \\ \notag
    &\quad = \sum_{i\in\mathbb{N}}  \left[\nabla f_i(\boldsymbol{w}^t) - \nabla f_{\xi^t(i)}(\boldsymbol{w}^t) \right] + \sum_{j\in\mathbb{S}^t} \theta_j^t\nabla f_{j}(\boldsymbol{w}^t).  \notag
\end{align}
Subtracting and taking the norm of the both sides, we have  
\begin{align}
    \big\Vert \sum_{i\in\mathbb{N}} \nabla f_i(\boldsymbol{w}^t) - \sum_{j\in\mathbb{S}^t} \theta_j^t \nabla f_{j}(\boldsymbol{w}^t) \big\Vert \leq 
    \sum_{i\in\mathbb{N}}  \big\Vert\nabla f_i(\boldsymbol{w}^t) - \nabla f_{\xi^t(i)}(\boldsymbol{w}^t) \big\Vert.
\end{align}
The upper bound is minimized when $\xi^t$ assigns each client $i\in\mathbb{N}$ to the client in the subset $\mathbb{S}^t$ that has the highest similarity in the gradient space. That is, 
\begin{align}
    \xi^t(i)\in\text{argmin}_{j\in\mathbb{S}^t} \big\Vert\nabla f_{i}(\boldsymbol{w}^t) - \nabla f_{j}(\boldsymbol{w}^t)\big\Vert,
\end{align}
Therefore, we have 
\begin{align}
    \min_{\theta_j^t} \big\Vert \sum_{i\in\mathbb{N}} \nabla f_i(\boldsymbol{w}^t) - \sum_{j\in\mathbb{S}^t} \theta_j^t\nabla f_{j}(\boldsymbol{w}^t) \big\Vert \leq 
    \sum_{i\in\mathbb{N}} \min_{j\in\mathbb{S}^t} \big\Vert\nabla f_i(\boldsymbol{w}^t) - \nabla f_{j}(\boldsymbol{w}^t) \big\Vert,
\end{align}
which completes the proof.
\end{proof}

\subsection{Proof of Theorem \emph{2}}
\label{apd:proof_virtual_queue}
\begin{proof}
We first present the theoretical analysis for $Z_i(t)$. Based on Eq. (13), we have 
\begin{align}
    Z_i(t+1)\geq Z_i(t) + x_{i,t} - x_{i\star,t} - \delta,
\end{align}
which is equivalent to 
\begin{align}
    x_{i,t} - x_{i\star,t} - \delta \leq Z_i(t+1) - Z_i(t).
\end{align}
Accumulating this inequality by $t$ for $t\in[1,T]$, we have
\begin{align}
    \sum_{t=1}^T \left(x_{i,t} - x_{i\star,t} - \delta\right) \leq Z_i(T) - Z_i(0)=Z_i(T).
\end{align}
Taking the expectation operation $\mathbb{E}$ on both sides, we have
\begin{align}
    \frac{1}{T} \sum_{t=1}^T \mathbb{E}\left(x_{i,t} - x_{i\star,t}\right) - \delta \leq \frac{\mathbb{E}[Z_i(T)]}{T}.
\end{align}
It is equivalent to
\begin{align}
    \lim_{T \to +\infty} \frac{\mathbb{E}[Z_i(T)]}{T} = 0 \; \Rightarrow \; \frac{1}{T}\sum_{t=1}^T \mathbb{E} (x_{i,t}-x_{i^\star,t}) - \delta \leq 0, \;
    \forall\, i\in \mathbb{N}.
\end{align}
The proof is similar for $Q_i(t)$, and is omitted here.
\end{proof}

\subsection{Proof of Theorem \emph{3}}
\label{apd:proof_drift_bound}
\begin{proof}
Based on Lemma \ref{lema:drift_bound}, we accumulate the inequality for $Z_i(t)$ in Eq. \eqref{equ:lemma_drift_bound}  by all clients and have
\begin{align}
    \frac{1}{2}\sum_{i=1}^N Z_i^2(t+1)  \leq  \frac{1}{2}\sum_{i=1}^N \left[Z_i(t)+m_i(t)\right]^2 
     = \frac{1}{2}\sum_{i=1}^N Z_i^2(t) + \frac{1}{2}\sum_{i=1}^N m_i^2(t) + \sum_{i=1}^N Z_i(t)m_i(t).
\end{align}
Similarly, for $Q_i(t)$, we have
\begin{align}
    \frac{1}{2}\sum_{i=1}^N Q_i^2(t+1)  \leq  \frac{1}{2}\sum_{i=1}^N \left[Q_i(t)+n_i(t)\right]^2 
     = \frac{1}{2}\sum_{i=1}^N Q_i^2(t) + \frac{1}{2}\sum_{i=1}^N n_i^2(t) + \sum_{i=1}^N Q_i(t)n_i(t).
\end{align}
Then, for the Lyapunov drift $\Delta\left(\Theta(t)\right) $, we have
\begin{align}
    \Delta\left(\Theta(t)\right) 
    &= L\left(\Theta(t+1)\right)- L\left(\Theta(t)\right) \\ \notag
    &= \frac{1}{2} \sum_{i=1}^N \left[Z_i^2(t+1)+Q_i^2(t+1)\right] - \frac{1}{2} \sum_{i=1}^N \left[Z_i^2(t)+Q_i^2(t)\right] \\  \notag 
    &= \frac{1}{2} \sum_{i=1}^N Z_i^2(t+1) - \frac{1}{2} \sum_{i=1}^N Z_i^2(t) + \frac{1}{2} \sum_{i=1}^N Q_i^2(t+1) - \frac{1}{2} \sum_{i=1}^N Q_i^2(t)  \\  \notag
    &\leq \frac{1}{2}\sum_{i=1}^N m_i^2(t) + \sum_{i=1}^N Z_i(t)m_i(t) + \frac{1}{2}\sum_{i=1}^N n_i^2(t) + \sum_{i=1}^N Q_i(t)n_i(t) \\  \notag
    &= \sum_{i=1}^N \left[Z_i(t)m_i(t) + Q_i(t)n_i(t)\right] + \frac{1}{2}\sum_{i=1}^N \left[m_i^2(t)+n_i^2(t)\right],  \\ \notag
    &\leq \sum_{i=1}^N \left[Z_i(t)m_i(t) + Q_i(t)n_i(t)\right] + B,
\end{align}
where $B$ is a positive value that acts as the upper bound for $\frac{1}{2}\sum_{i=1}^N \left[m_i^2(t)+n_i^2(t)\right]$.
\end{proof}

\begin{lemma}
\label{lema:drift_bound}
    Based on Eq. (13), we have 
    \begin{align}
    \label{equ:lemma_drift_bound}
        Z_i^2(t+1)\leq \left[Z_i(t)+m_i(t)\right]^2, \quad \text{and} \quad Q_i^2(t+1)\leq \left[Q_i(t)+n_i(t)\right]^2.
    \end{align}
\end{lemma}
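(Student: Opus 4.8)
The plan is to recognize that the virtual-queue recursion in Eq.~\eqref{equ:update}, once rewritten with the shorthand $m_i(t)$ and $n_i(t)$, is literally of the form ``previous value plus an increment, then truncated at zero,'' and to combine this with the elementary observation that truncating at zero never increases the square of a real number.

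First I would record the elementary inequality: for every $a\in\mathbb{R}$, $\bigl(\max\{a,0\}\bigr)^2 \leq a^2$. This is a one-line case analysis: if $a\geq 0$ then $\max\{a,0\}=a$ and equality holds, while if $a<0$ then $\max\{a,0\}=0$ and the left-hand side is $0\leq a^2$. Next I would substitute $a = Z_i(t)+m_i(t)$. Since $m_i(t)=x_{i,t}-x_{i^\star_t,t}-\delta$, the update rule in Eq.~\eqref{equ:update} reads exactly $Z_i(t+1)=\max\{Z_i(t)+m_i(t),\,0\}$, so the elementary inequality gives $Z_i^2(t+1)\leq \bigl(Z_i(t)+m_i(t)\bigr)^2$ at once. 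The bound for $Q_i$ is obtained identically by taking $a=Q_i(t)+n_i(t)$ and using $n_i(t)=-x_{i,t}+x_{i^\star_t,t}-\delta$ together with $Q_i(t+1)=\max\{Q_i(t)+n_i(t),\,0\}$.

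There is essentially no obstacle here; the only point requiring care is matching the notation of Eq.~\eqref{equ:update} with the $m_i(t),n_i(t)$ abbreviations of Theorem~\ref{theo:drift_bound}, and noting that the projection onto the nonnegative orthant is precisely what makes the squared queue length dominated by the square of the untruncated update. This is exactly the inequality that the telescoping computation in the proof of Theorem~\ref{theo:drift_bound} consumes when it sums over $i$ and expands $\bigl(Z_i(t)+m_i(t)\bigr)^2$.
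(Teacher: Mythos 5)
Your proof is correct and takes essentially the same route as the paper: both reduce the claim to the elementary fact that $\bigl(\max\{a,0\}\bigr)^2\leq a^2$, verified by the same two-case analysis on the sign of $Z_i(t)+m_i(t)$ (resp.\ $Q_i(t)+n_i(t)$). Your write-up is in fact slightly cleaner, since the paper's first case is mislabeled with "$\leq 0$" where it means "$\geq 0$".
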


\begin{proof}
First, if $Z_i(t)+m_i(t)\leq 0$, then $Z_i(t+1)=Z_i(t)+m_i(t)$, and we have 
\begin{align}
    Z_i^2(t+1) = \left[Z_i(t)+m_i(t)\right]^2.
\end{align}
Next, if $Z_i(t)+m_i(t)< 0$, then $Z_i(t+1)=0>Z_i(t)+m_i(t)$, and we have 
\begin{align}
    Z_i^2(t+1) < \left[Z_i(t)+m_i(t)\right]^2
\end{align}
Combining the two cases, we have
\begin{align}
    Z_i^2(t+1) \leq \left[Z_i(t)+m_i(t)\right]^2
\end{align}
The analysis is similar for $Q_i(t)$, and is omitted here. 
\end{proof}

\section{Additional Details of The Convergence Analysis}
\label{apd:convergence}

\subsection{Proof of Theorem \emph{4}}

\begin{proof}
Our theoretical analysis is based on the FedAvg \cite{mcmahan2017communication} method, and it can also be extended to other federated optimization methods. To align with the approach in \cite{li2020convergence}, we unify the epochs of local training in clients and communication rounds for parameter transmission between the server and clients into a single dimension, indexed by $t = sE+k$. Here, $s$ denotes the index of the current communication round, $E$ is the number of local epochs in a communication round, and $k\in [1,E-1]$. If $t$ is divisible by $E$ (indicated as $t\mid E$), it signifies the communication step where the server aggregates the model parameters from the selected clients. Otherwise, it represents a local training step for clients.

To show the convergence, we introduce an auxiliary variable $\boldsymbol{v}_i^{t}$ to signify the immediate result of a single stochastic gradient descent (SGD) step in local updates. That is, 
\begin{align}
    \boldsymbol{v}_i^{t+1} = \boldsymbol{w}_i^t - \eta_t \nabla f_i(\boldsymbol{w}_i^t, \beta_i^t), \quad \text{and} \quad 
    \boldsymbol{w}_i^{t} = 
    \begin{cases}
        \sum_{i\in \mathbb{S}^{t}}\theta_i^t\boldsymbol{v}_i^{t}, \; &\text{if} \; t\mid E, \\
        \boldsymbol{v}_i^{t}, \; &\text{otherwise}. 
    \end{cases}
\end{align}
Based on $\boldsymbol{v}_i^{t}$ and $\boldsymbol{w}_i^t$, we define two virtual sequences $\bar{\boldsymbol{v}}^{t}$ and $\bar{\boldsymbol{w}}^t$, 
\begin{align}
    \bar{\boldsymbol{v}}^{t} = \sum_{i} \theta_i^t \boldsymbol{v}_i^t, \quad \text{and} \quad
    \bar{\boldsymbol{w}}^t = \sum_{i} \theta_i^t \boldsymbol{w}_i^t.
\end{align}
and define
\begin{align}
    \bar{\boldsymbol{g}}^t = \sum_{i=1}^N \theta_i^t \nabla f_i(\boldsymbol{w}_i^t) \quad \text{and} \quad  \boldsymbol{g}^t = \sum_{i=1}^N \theta_i^t \nabla f_i(\boldsymbol{w}_i^t, \alpha_i^t).
\end{align}
Therefore, we have 
\begin{align}
     \bar{\boldsymbol{v}}^{t+1} = \bar{\boldsymbol{w}}^t - \eta_t \boldsymbol{g}^t \quad \text{and} \quad \mathbb{E}(\boldsymbol{g}^t) = \bar{\boldsymbol{g}}^t
\end{align}
Note that
\begin{align}
    \Vert \bar{\boldsymbol{w}}^{t+1}-\boldsymbol{w}^\star \Vert^2 &= \Vert \bar{\boldsymbol{w}}^{t+1}-\bar{\boldsymbol{v}}^{t+1} + \bar{\boldsymbol{v}}^{t+1}- \boldsymbol{w}^\star \Vert^2 \\ \notag
    &= \underbrace{\Vert \bar{\boldsymbol{w}}^{t+1}-\bar{\boldsymbol{v}}^{t+1} \Vert^2}_{A_1} 
    + \underbrace{\Vert \bar{\boldsymbol{v}}^{t+1}- \boldsymbol{w}^\star \Vert^2}_{A_2} 
    + \underbrace{2\cdot\langle\bar{\boldsymbol{w}}^{t+1}-\bar{\boldsymbol{v}}^{t+1},\; \bar{\boldsymbol{v}}^{t+1}- \boldsymbol{w}^\star\rangle}_{A_3}. 
\end{align}
That is, we can bound $\Vert \bar{\boldsymbol{w}}^{t+1}-\boldsymbol{w}^\star \Vert$ by obtaining the upper bounds of the three terms, i.e., $A_1$, $A_2$, and $A_3$, respectively.

\textbf{Upper bound of Term $A_1$.} Consider the last time of aggregation occurs at the step $t_0=t+1-E$, and  let $\Delta \boldsymbol{v}^\tau_i=\boldsymbol{v}^{\tau+1}_i-\boldsymbol{v}^\tau_i$ be the updates on $\boldsymbol{v}^\tau_i$ at the $\tau$-th step, then we have  
\begin{align}
    \bar{\boldsymbol{v}}^{t+1} = \bar{\boldsymbol{w}}^{t_0} + \frac{1}{N}\sum_{i=1}^N \sum_{\tau=t_0}^t \Delta \boldsymbol{v}^\tau_i.
\end{align}
The term $A_1$ is equivalent to
\begin{align}
    \Vert \bar{\boldsymbol{w}}^{t+1}-\bar{\boldsymbol{v}}^{t+1} \Vert^2 
    &= \left\Vert \left(\bar{\boldsymbol{w}}^{t_0}+\frac{1}{N}\sum_{i\in \mathbb{S}^t} \theta_i^t \sum_{\tau=t_0}^t\Delta \boldsymbol{v}^\tau_i\right) - 
    \left(\bar{\boldsymbol{w}}^{t_0}+ \frac{1}{N}\sum_{i=1}^N \sum_{\tau=t_0}^t \Delta \boldsymbol{v}^\tau_i\right)\right\Vert  \\ \notag
    &= \left\Vert \sum_{\tau=t_0}^t \left(\frac{1}{N} \sum_{i\in \mathbb{S}^t} \theta_i^t \Delta \boldsymbol{v}^\tau_i -\frac{1}{N}\sum_{i=1}^N \Delta \boldsymbol{v}^\tau_i\right)\right\Vert \\ \notag
    &\leq \sum_{\tau=t_0}^t \left\Vert \frac{1}{N} \sum_{i\in \mathbb{S}^t} \theta_i^t \Delta \boldsymbol{v}^\tau_i -\frac{1}{N}\sum_{i=1}^N \Delta \boldsymbol{v}^\tau_i \right\Vert.
\end{align}
Note that for every local step $\tau\in (t_0,t]$, we use the same $\mathbb{S}^t$ to approximate the full gradients. Based on Assumption 6, we have 
\begin{align}
    \left\Vert \frac{1}{N} \sum_{i\in \mathbb{S}^t} \theta_i^t \nabla f_i (\boldsymbol{v}^{\tau}_i)  -\frac{1}{N}\sum_{i=1}^N \nabla f_i (\boldsymbol{v}^{\tau}_i) \right\Vert
    & \leq \left\Vert \frac{1}{N} \sum_{i\in \mathbb{S}^t} \theta_i^t \nabla f_i (\boldsymbol{v}^{\tau}_i)  -\frac{1}{N}\sum_{i\in \mathbb{S}^t} \theta_i^t \nabla f_i (\boldsymbol{v}^{t_0}_i) \right\Vert \\
    & \qquad + \left\Vert \frac{1}{N} \sum_{i\in \mathbb{S}^t} \theta_i^t \nabla f_i (\boldsymbol{v}^{\tau}_i)  -\frac{1}{N}\sum_{i=1}^N \nabla f_i (\boldsymbol{v}^{\tau}_i) \right\Vert \\
    & \qquad + \left\Vert \frac{1}{N}\sum_{i=1}^N \nabla f_i (\boldsymbol{v}^{t_0}_i)  -\frac{1}{N}\sum_{i=1}^N \nabla f_i (\boldsymbol{v}^{\tau}_i) \right\Vert \\
    & \leq 2LB_3\sum_{v=t_0}^\tau \eta_v + \rho
\end{align}
where the first and third terms on the right-hand side are bounded by Assumption 1 and Assumption 3, respectively. Therefore, the term $A_1$ is bounded by 
\begin{align}
    \Vert \bar{\boldsymbol{w}}^{t+1}-\bar{\boldsymbol{v}}^{t+1} \Vert^2 
    & \leq \sum_{\tau=t_0}^t \left\Vert \frac{1}{N} \sum_{i\in \mathbb{S}^t} \theta_i^t \Delta \boldsymbol{v}^\tau_i -\frac{1}{N}\sum_{i=1}^N \Delta \boldsymbol{v}^\tau_i \right\Vert \\ \notag
    & = \sum_{\tau=t_0}^t \eta_\tau \left\Vert \frac{1}{N} \sum_{i\in \mathbb{S}^t} \theta_i^t \nabla f_i (\boldsymbol{v}^{\tau}_i) -\frac{1}{N}\sum_{i=1}^N \nabla f_i (\boldsymbol{v}^{\tau}_i) \right\Vert \\ \notag
    & \leq 2LB_3 \sum_{\tau=t_0}^t \sum_{v=t_0}^\tau \eta_\tau \eta_v + E\rho\eta_\tau \\ \notag
    &\leq LB_3E(E-1)\eta_{t_0}^2 + E\rho\eta_{t_0} \\ \notag
\end{align}

\textbf{Upper bound of $A_2$.} Under the Assumptions 1 and 2, based on the Lemma 1 in \cite{li2020convergence}, we have 
\begin{align}
    \mathbb{E}\Vert \bar{\boldsymbol{v}}^{t+1}- \boldsymbol{w}^\star \Vert^2 \leq 
    (1-\eta_t\mu)\mathbb{E}\Vert \bar{\boldsymbol{w}}^{t}- \boldsymbol{w}^\star \Vert^2 + \eta_t^2 C_1,
\end{align}
where $C_1$ is a constant. 

\textbf{Upper bound of $A_3$. } Following the proof in \cite{balakrishnan2022diverse}, we have  $\mathbb{E}\left[\Vert \bar{\boldsymbol{v}}^{t+1}- \boldsymbol{w}^\star \Vert\right]$ can be bounded by a constant $C_2$, which is determined by the value of ${B_3}/{\mu}$.

Based on the above analysis, we have
\begin{align}
    \mathbb{E}  \Vert \bar{\boldsymbol{w}}^{t+1}-\boldsymbol{w}^\star \Vert^2 
    & \leq \mathbb{E}\Vert \bar{\boldsymbol{w}}^{t+1}-\bar{\boldsymbol{v}}^{t+1} \Vert^2 
    + \mathbb{E} \Vert \bar{\boldsymbol{v}}^{t+1}- \boldsymbol{w}^\star \Vert^2 
    + 2\cdot\mathbb{E} \left[\langle\bar{\boldsymbol{w}}^{t+1}-\bar{\boldsymbol{v}}^{t+1},\; \bar{\boldsymbol{v}}^{t+1}- \boldsymbol{w}^\star\rangle\right]   \\ \notag
    & \leq \left[LB_3E(E-1)\eta_{t_0}^2 + E\rho\eta_{t_0}\right]^2 + \left[(1-\eta_t\mu)\mathbb{E}\Vert \bar{\boldsymbol{w}}^{t}- \boldsymbol{w}^\star \Vert^2 + \eta_t^2 C_1\right] \\ \notag
    & \qquad \qquad \qquad \qquad \qquad \qquad + 2\left[LB_3E(E-1)\eta_{t_0}^2 + E\rho\eta_{t_0} \right] 
    \cdot \mathbb{E} \Vert \bar{\boldsymbol{v}}^{t+1}- \boldsymbol{w}^\star \Vert^2 \\ \notag
    & \leq (1-\eta_t\mu)\mathbb{E}\Vert \bar{\boldsymbol{w}}^{t}- \boldsymbol{w}^\star \Vert^2
    + \left[LB_3E(E-1)C_2+\left(LB_3E(E-1)\eta_{t_0}+E\rho\right)^2\right]\eta_{t_0}^2 \\ \notag
    &\qquad \qquad \qquad \qquad \qquad \qquad + EC_2\rho\eta_{t_0} + C_1\eta_{t}^2 \\ \notag
    &\leq (1-\eta_t\mu)\mathbb{E}\Vert \bar{\boldsymbol{w}}^{t}- \boldsymbol{w}^\star \Vert^2
    + \mathcal{O}(\rho) + \mathcal{O}(\eta_t^2) + \mathcal{O}(\eta_{t_0}^4).
\end{align}
By letting 
\begin{align}
    \eta_t=\frac{\beta}{t+\gamma}, \quad \text{and} \quad \eta_{t_0}=\frac{\beta}{t+1-E+\gamma}
\end{align}
with $\beta>{1}/{\mu}$ and $\gamma>0$ to achieve a diminishing learning rate, we complete the proof.

\end{proof}

\section{Additional Details of Experiment}
\subsection{Experimental Settings}
Following \cite{tang2022fedcor}, for the FMNIST dataset, we employ a multilayer perceptron with two hidden layers as the global model, where the number of units in the two hidden layers is 64 and 30, respectively. Additionally, we set the number of local epochs to 3, the local batch size to 64, and the learning rate to $0.005$. For the CIFAR-10 dataset, the architecture of the global model is a convolutional neural network (CNN) with three convolutional layers having 32, 64, and 64 kernels, respectively. All kernels are designed with the size $3 \times 3$. Besides, the outputs of the convolutional layers are fed into an MLP layer with 64 units. We set the number of local epochs to 5, the local batch size to 128, and the learning rate to 0.05.

\subsection{Visualization of Client Selection Strategy}
The visualization results for the FMNIST dataset under the Dir scenario are presented in Fig.~\ref{fig:visual_fmnist_dir}. Further, we provide visualizations for the CIFAR-10 dataset in the 1SPC, 2SPC, and Dir scenarios in Fig. \ref{fig:visual_cifar_1spc}, Fig. \ref{fig:visual_cifar_2spc}, and Fig. \ref{fig:visual_cifar_dir}, respectively. The conclusions drawn from the CIFAR-10 dataset align with those from the FMNIST dataset.

\begin{figure}[tbp]
    \centering
    \includegraphics[width=17cm]{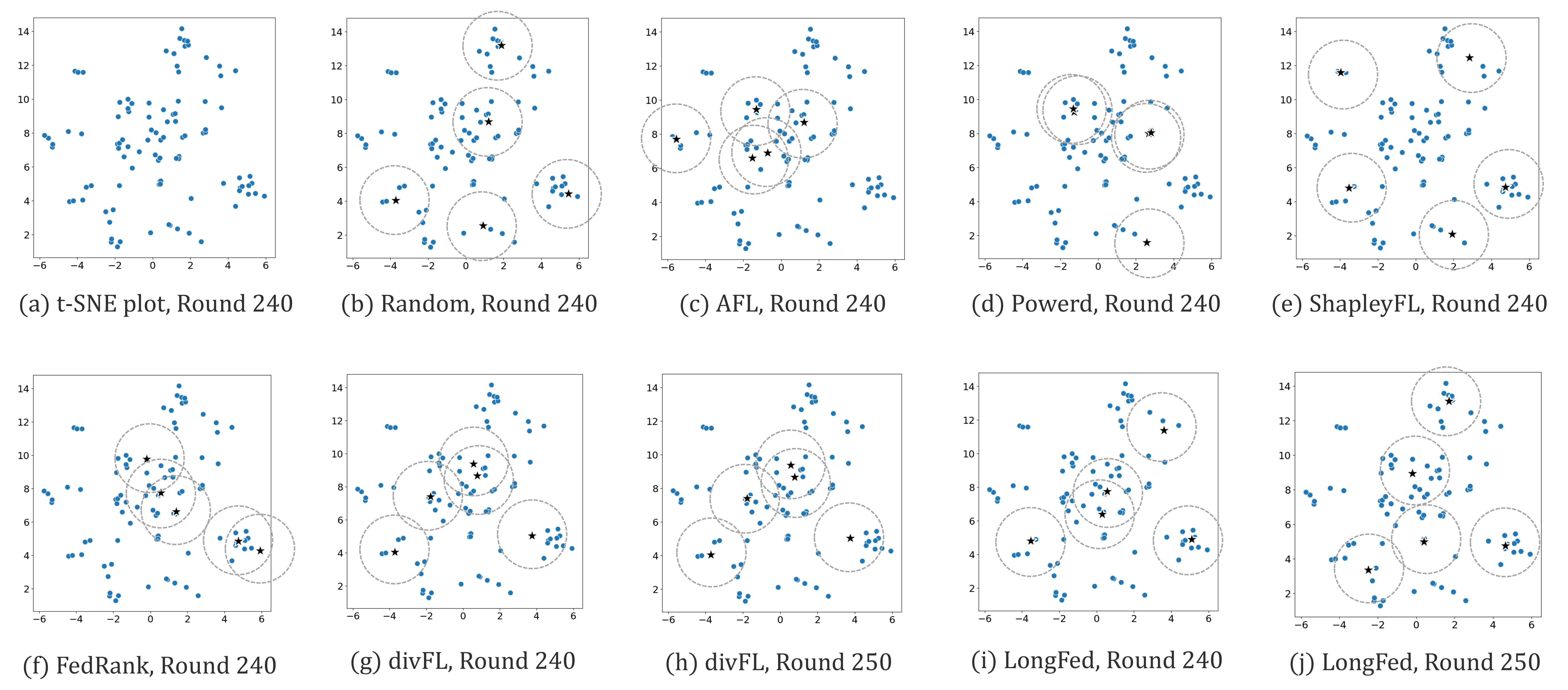}
    \caption{Visualization of selected clients on FMNIST under the Dir scenario.}
    \label{fig:visual_fmnist_dir}
\end{figure}

\begin{figure}[tbp]
    \centering
    \includegraphics[width=17cm]{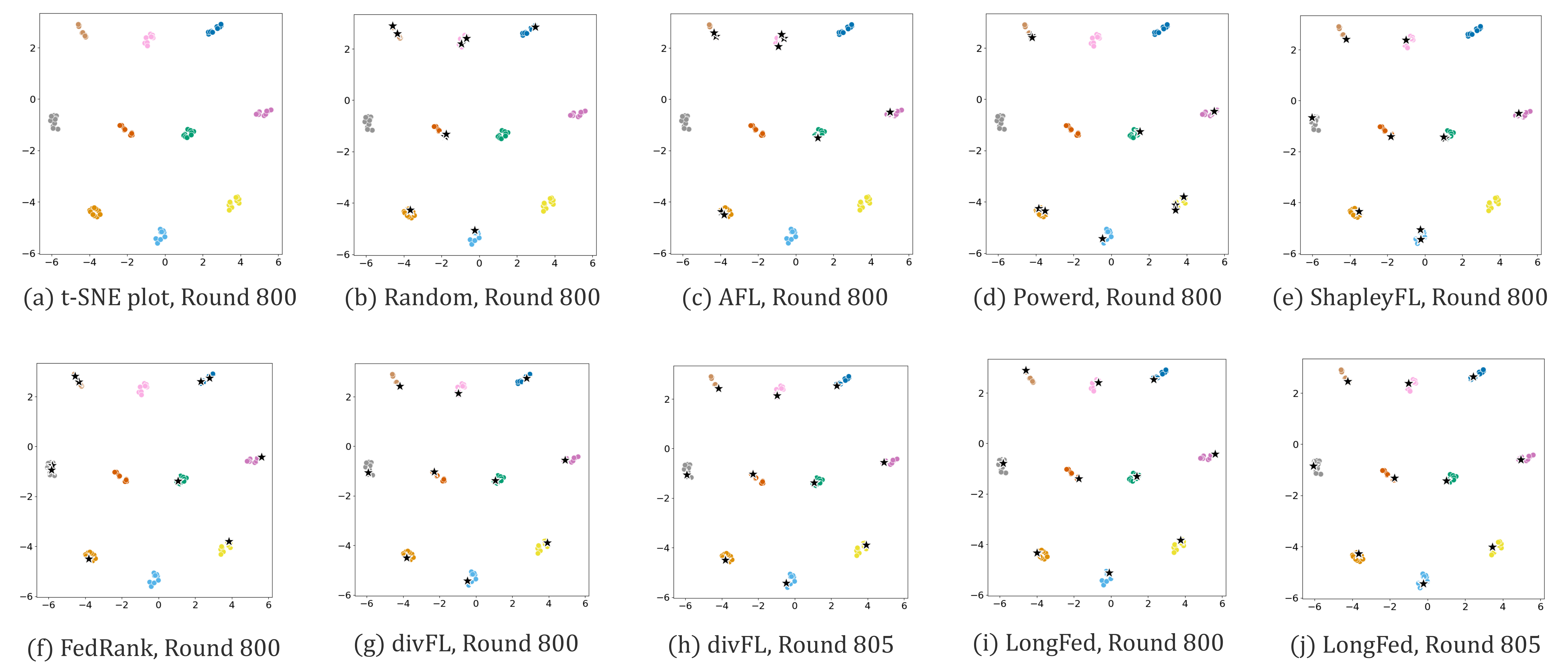}
    \caption{Visualization of selected clients on CIFAR-10 under the 1SPC scenario.}
    \label{fig:visual_cifar_1spc}
\end{figure}

\begin{figure}[tbp]
    \centering
    \includegraphics[width=17cm]{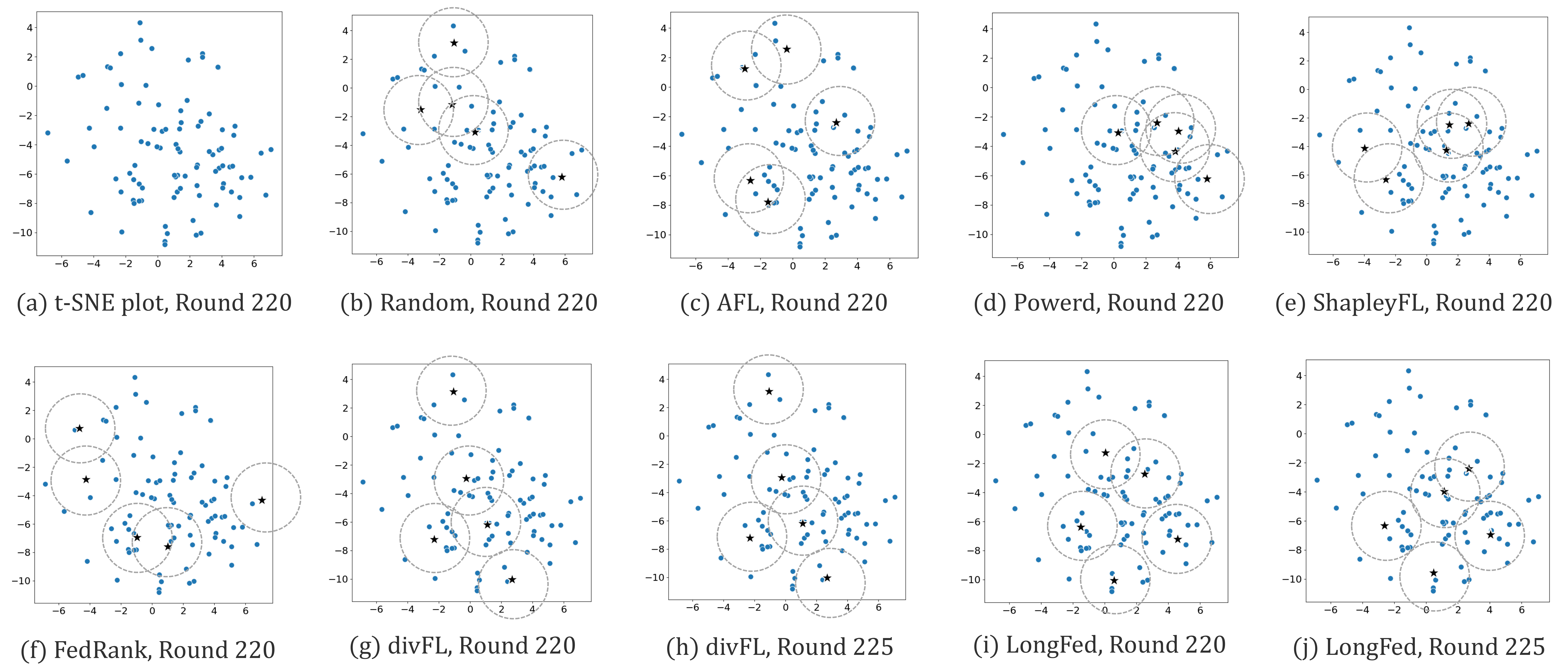}
    \caption{Visualization of selected clients on CIFAR-10 under the 2SPC scenario.}
    \label{fig:visual_cifar_2spc}
\end{figure}

\begin{figure}[tbp]
    \centering
    \includegraphics[width=17cm]{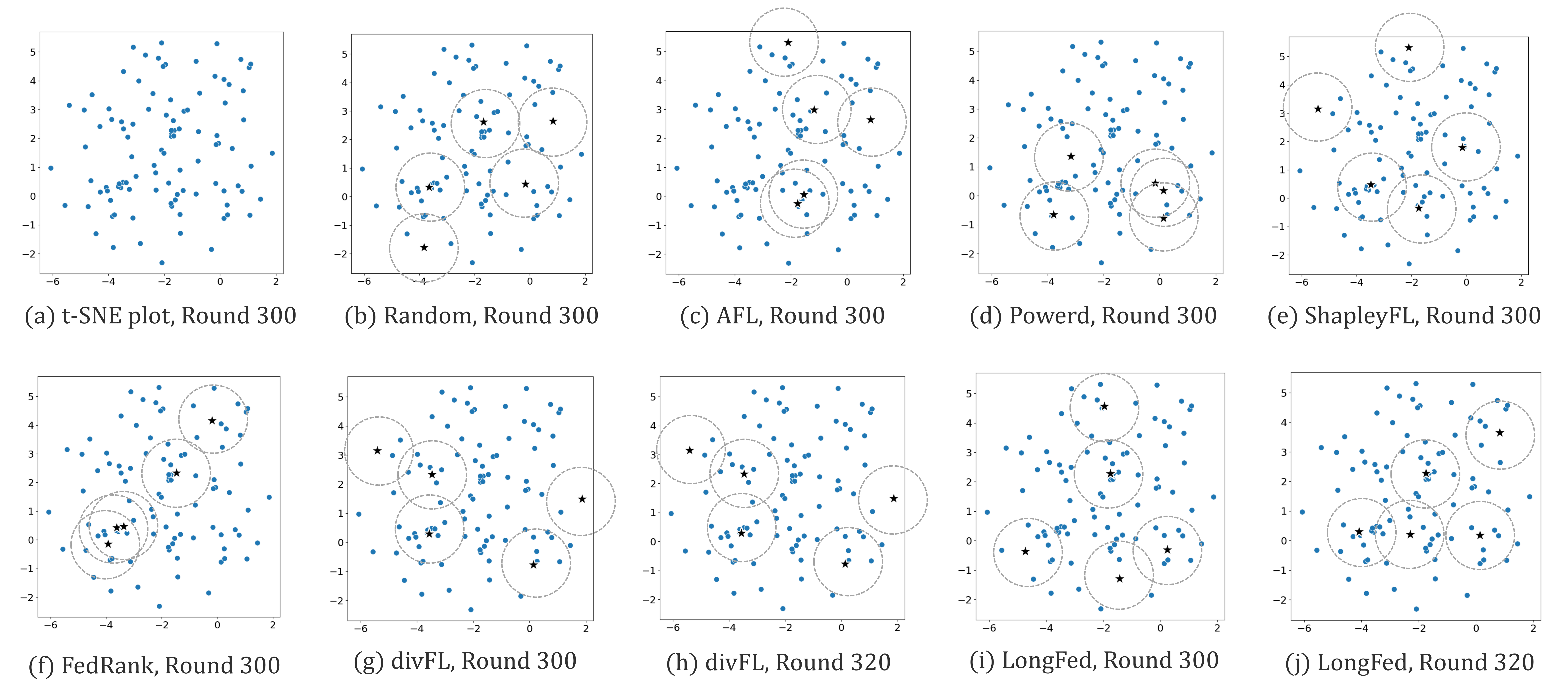}
    \caption{Visualization of selected clients on CIFAR-10 under the Dir scenario.}
    \label{fig:visual_cifar_dir}
\end{figure}



\end{document}